\documentclass{article} 
\usepackage{iclr2026_conference,times}

\usepackage{hyperref}
\usepackage{url}
\usepackage{times}
\usepackage{aligned-overset}
\usepackage{bbm}
\usepackage{booktabs}
\usepackage{amsfonts}
\usepackage{amsthm}
\usepackage{etoolbox}
\usepackage{todonotes}
\usepackage{tikz}
\usepackage{pgfplots}

\title{Block-Sample MAC-Bayes Generalization Bounds}
\author{%
  Matthias Frey and Jingge Zhu \\
  Department of Electrical and \\Electronic Engineering\\
  The University of Melbourne
  \And
  Michael C. Gastpar \\
  Laboratory for Information in Networked Systems \\
  École polytechnique fédérale de Lausanne (EPFL)
}

\newcommand{\Expectation}{\mathbb{E}}
\newcommand{\sampleHypothesisDist}{P}
\newcommand{\bayesPrior}{Q}
\newcommand{\blockSize}{m}
\newcommand{\numBlocks}{J}
\newcommand{\blockIndex}{j}
\newcommand{\numTrainingSamples}{n}
\newcommand{\sampleIndex}{i}
\newcommand{\naturalsInitialSegment}[1]{[#1]}
\newcommand{\trainingSet}{S}
\newcommand{\trainingSample}{Z}
\newcommand{\mgfVar}{\lambda}
\newcommand{\intervalBoundUpper}{b}
\newcommand{\hypothesis}{W}
\newcommand{\hypothesisSpace}{\mathcal{W}}
\newcommand{\hypothesisInst}{w}
\newcommand{\lossFunc}{\ell}
\newcommand{\distanceFunc}{d}
\newcommand{\mgfFunc}{\Phi}
\newcommand{\risk}{L}
\newcommand{\empiricalRisk}{\hat{L}}
\newcommand{\kldiv}[2]{D\left(#1||#2\right)}
\newcommand{\sampleBlock}[1]{S_{#1}}
\newcommand{\sampleBlockSum}[1]{\bar{S}_{#1}}
\newcommand{\dvAuxFunc}{g}

\newcommand{\absolute}[1]{\left\lvert #1 \right\rvert}
\newcommand{\trainingSampleInst}{z}
\newcommand{\trainingSetInst}{s}

\newcommand{\featureLabelSpace}{\mathcal{Z}}
\newcommand{\subgaussnorm}{\sigma}
\newcommand{\reals}{\mathbb{R}}
\newcommand{\errorprob}{\delta}
\newcommand{\generalDistanceFuncArgumentOne}{r}
\newcommand{\generalDistanceFuncArgumentTwo}{s}
\newcommand{\generalBoundConstTerm}{A}
\newcommand{\generalBoundMultTerm}{B}

\newcommand{\counterexamplePar}{K}
\newcommand{\counterexampleProb}{\phi}
\newcommand{\counterexampleBayesPar}{\alpha}
\newcommand{\triggerEvent}{\Omega_\phi}
\newcommand{\naturals}{\mathbb{N}}
\newcommand{\generalNatural}{k}
\newcommand{\generalIndex}{j}
\newcommand{\generalNaturalTwo}{\ell}
\newcommand{\lowRiskHypothesis}{w_0}
\newcommand{\overfittedHypothesis}[1]{\hat{w}_{#1}}
\newcommand{\indicator}[1]{\mathbbm{1}_{#1}}
\newcommand{\partialTuple}[3]{{#1}_{#2:#3}}
\newcommand{\landauO}{\mathcal{O}}
\newcommand{\landauo}{o}
\newcommand{\landauTheta}{\Theta}

\newcommand{\instantaneousDivergence}{\Delta_\mathrm{inst}}
\newcommand{\binaryKL}[2]{\mathrm{kl}\left(#1\|#2\right)}

\newcommand{\generalRV}{X}
\newcommand{\generalFunction}{f}
\newcommand{\Probability}{\mathbb{P}}
\newcommand{\riskInst}{r}
\newcommand{\binompmf}[3]{\mathrm{Bin}(#1;#2,#3)}
\newcommand{\catoniFunc}[1]{C_{#1}}
\newcommand{\catoniPar}{\beta}
\newcommand{\counterexampleRhs}{\mathcal{R}}
\newcommand{\truncatorFunction}{K}
\newcommand{\generalReal}{x}
\newcommand{\generalBoundQuantity}{I}

\newcommand{\unspecQuantity}{I''}
\newcommand{\errorprobFunction}{f}
\newcommand{\generalExponent}{\varepsilon}
\newcommand{\divGrowthExponent}{\gamma}
\newcommand{\blockSizeGrowthExponent}{\alpha}
\newcommand{\generalRealTwo}{t}
\newcommand{\empiricalLossExponent}{\varepsilon}
\newcommand{\generalizationGap}{\mathrm{gen}}
\newcommand{\generalCondition}{\varphi}
\newcommand{\absolutelyContinuous}{\ll}

\newcommand{\followsDist}{\sim}
\newcommand{\normalDist}[2]{\mathcal{N}\left(#1,#2\right)}
\newcommand{\mean}{\mu}

\newtheorem{theorem}{Theorem}
\newtheorem{remark}{Remark}
\newtheorem{corollary}{Corollary}
\newtheorem{lemma}{Lemma}

\makeatletter
\providerobustcmd*{\bigcupdot}{%
  \mathop{%
    \mathpalette\bigop@dot\bigcup
  }%
}
\newrobustcmd*{\bigop@dot}[2]{%
  \setbox0=\hbox{$\m@th#1#2$}%
  \vbox{%
    \lineskiplimit=\maxdimen
    \lineskip=-0.7\dimexpr\ht0+\dp0\relax
    \ialign{%
      \hfil##\hfil\cr
      $\m@th\cdot$\cr
      \box0\cr
    }%
  }%
}
\makeatother

\makeatletter
\newcommand{\vast}{\bBigg@{4}}
\newcommand{\Vast}{\bBigg@{5}}
\makeatother

\iclrfinalcopy 
\begin{document}

\maketitle

\begin{abstract}%
We present a family of novel block-sample MAC-Bayes bounds (mean approximately correct). While PAC-Bayes bounds (probably approximately correct) typically give bounds for the generalization error that hold with high probability, MAC-Bayes bounds have a similar form but bound the expected generalization error instead. The family of bounds we propose can be understood as a generalization of an expectation version of known PAC-Bayes bounds. Compared to standard PAC-Bayes bounds, the new bounds contain divergence terms that only depend on subsets (or \textit{blocks}) of the training data. The proposed MAC-Bayes bounds hold the promise of significantly improving upon the tightness of traditional PAC-Bayes and MAC-Bayes bounds. This is illustrated with a simple numerical example in which the original PAC-Bayes bound is vacuous regardless of the choice of prior, while the proposed family of bounds are finite for appropriate choices of the block size. We also explore the question whether high-probability versions of our MAC-Bayes bounds (i.e., PAC-Bayes bounds of a similar form) are possible. We answer this question in the negative with an example that shows that in general, it is not possible to establish a PAC-Bayes bound which (a) vanishes with a rate faster than $\landauO(1/\log \numTrainingSamples)$ whenever the proposed MAC-Bayes bound vanishes with rate $\landauO(\numTrainingSamples^{-1/2})$ and (b) exhibits a logarithmic dependence on the permitted error probability.
\end{abstract}

\section{Introduction}
\label{sec:intro}
Statistical learning theory studies the behavior of algorithms that infer a \emph{hypothesis} $\hypothesis$ from a \emph{training set} $\trainingSet$. This framework can be understood as a mathematical abstraction of real-world machine learning algorithms. For instance, if we want to analyze a neural network for image classification, $\trainingSet$ would be a sequence of images with class labels and $\hypothesis$ would be the set of all the weights and biases of the trained neural network. In this example, a standard way of finding $\hypothesis$ given $\trainingSet$ would be to run a stochastic gradient descent algorithm. Once such an algorithm is implemented, it defines a (possibly stochastic) mapping which (possibly randomly) assigns a hypothesis $\hypothesis$ to any given training set $\trainingSet$. We write this algorithm as a conditional probability distribution $\sampleHypothesisDist_{\hypothesis|\trainingSet}$. The analysis of the performance of such algorithms deals with the relationship of two fundamental quantities: The \emph{empirical loss} describes how well the hypothesis given by the algorithm fits the training set $\trainingSet$, while the \emph{population loss} describes how well the hypothesis fits new data that may not be contained in $\trainingSet$ but follows the same probability distribution that generated $\trainingSet$. While the empirical loss of any given hypothesis can easily be calculated, the population loss is harder to deal with. It can, e.g., be estimated if an additional testing set is available that follows the same distribution as the training set, and while this is commonly done in practice, it carries many limitations. For instance, the testing set is usually obtained by partitioning the set of available labeled data (making the training set smaller) and if the algorithm is tweaked based on testing results, then technically, a new and independent testing set would be required to test the new algorithm. One fundamental question of statistical learning theory therefore is to understand and characterize the \emph{generalization error}, which is defined as the difference between the empirical and population loss. If the generalization error is well understood, we can draw conclusions about the population loss from calculating the empirical loss.

More concretely, in the scenario we study in this paper, we assume that $\trainingSet = (\trainingSample_1, \dots, \trainingSample_\numTrainingSamples)$ is a sequence of length $\numTrainingSamples$ that is distributed i.i.d. according to a data generating distribution $\sampleHypothesisDist_\trainingSample$ on a set $\featureLabelSpace$, and the way in which we measure performance is described by a loss function $\lossFunc: \hypothesisSpace \times \featureLabelSpace \rightarrow [0, \infty)$. In the image classification example, each $\trainingSample_\sampleIndex$ would be an image and its associated classification label (the ground truth) and the loss function could be the classification loss, i.e., it maps a pair $(\hypothesisInst, \trainingSampleInst)$ to $0$ if the classification label produced by the neural network with weights $\hypothesisInst$ and the image contained in $\trainingSampleInst$ matches the classification label contained in $\trainingSampleInst$ and to $1$ otherwise. On the basis of this loss function, the empirical loss of $\hypothesisInst$ under $\trainingSetInst = (\trainingSampleInst_1, \dots, \trainingSampleInst_\numTrainingSamples)$ is defined as $\empiricalRisk(\hypothesisInst, \trainingSetInst) := \frac{1}{\numTrainingSamples} \sum_{\sampleIndex=1}^\numTrainingSamples \lossFunc(\hypothesisInst, \trainingSampleInst_\sampleIndex)$, and the population loss of $\hypothesisInst$ is defined as $\risk(\hypothesisInst) := \Expectation_{\sampleHypothesisDist_\trainingSample} \lossFunc(\hypothesisInst, \trainingSample)$.

PAC-Bayes bounds, initiated by \cite{mcallester_pac-bayesian_1999}  and \cite{shawe-taylor_pac_1997}, provide a framework of bounding the generalization error and thus give a learner the ability to draw conclusions about the population loss based on a computation of the empirical loss. They usually take the form
\begin{equation}
\label{eq:original-pac-bayes}
\forall \errorprob \in (0,1]~
\sampleHypothesisDist_{\trainingSet}\left(
  \distanceFunc(\Expectation_{\sampleHypothesisDist_{\hypothesis|\trainingSet}}\empiricalRisk(\hypothesis, \trainingSet), \Expectation_{\sampleHypothesisDist_{\hypothesis|\trainingSet}}\risk(\hypothesis))
  \geq
  \frac{
    \kldiv{\sampleHypothesisDist_{\hypothesis|\trainingSet}}{\bayesPrior_\hypothesis}
    +
    \generalBoundQuantity(\numTrainingSamples,\distanceFunc)
    +
    \log\frac{1}{\errorprob}
  }{
    \numTrainingSamples
  }
\right)
\leq
\errorprob,
\end{equation}
where:
\begin{itemize}
  \item $\sampleHypothesisDist_\trainingSet := \sampleHypothesisDist_\trainingSample^\numTrainingSamples$ is the product distribution that generates the training set $\trainingSet$.
  \item $\distanceFunc$ is a \emph{comparator function}. In this paper, we make the assumption (which is standard in the context of PAC-Bayes and related bounds) that $\distanceFunc$ is jointly convex in its arguments. For the left hand side of the bound to coincide with the generalization error, we would choose $\distanceFunc(\generalDistanceFuncArgumentOne, \generalDistanceFuncArgumentTwo) := \generalDistanceFuncArgumentTwo - \generalDistanceFuncArgumentOne$, but other notions of distance are possible here as well.
  \item $\kldiv{\sampleHypothesisDist_{\hypothesis | \trainingSet}}{\bayesPrior_\hypothesis}$ is the Kullback-Leibler (KL) divergence between $\sampleHypothesisDist_{\hypothesis | \trainingSet}$ and some \emph{prior} $\bayesPrior_\hypothesis$.
  \item $\bayesPrior_\hypothesis$, although called the \emph{prior}, does not need to be a representation of prior belief in the traditional Bayesian sense, but is rather a parameter that can be arbitrarily chosen for the evaluation of the bound. It is a probability distribution on the hypothesis space $\hypothesisSpace$, and the bound is valid for every choice of $\bayesPrior_\hypothesis$ that does not depend on $\trainingSet$. Ideally, $\bayesPrior_\hypothesis$ would be chosen in such a way that it makes the right hand side as small as possible (and therefore making the bound as tight as possible) but in practice it is often chosen to be a probability distribution that is feasible to calculate with, such as a uniform or Gaussian distribution.
  \item $\generalBoundQuantity(\numTrainingSamples,\distanceFunc)$ is a quantity that depends on the number of training samples $\numTrainingSamples$ and $\distanceFunc$ as well as the prior $\bayesPrior$. Usually it contains a moment generating function. For instance, in \cite[Theorem 2.1]{germain_pac-bayesian_2009}, this quantity is
  $
  \generalBoundQuantity(\numTrainingSamples,\distanceFunc)
  :=
  \log
  \Expectation_{\sampleHypothesisDist_\trainingSet \bayesPrior_\hypothesis}
    \exp\left(
      \numTrainingSamples
      \distanceFunc(\empiricalRisk(\hypothesis,\trainingSet), \risk(\hypothesis))
    \right).
  $
  Note that while $\bayesPrior_\hypothesis$ can be chosen for the sake of analysis as a probability distribution that is tractable to calculate with, $\sampleHypothesisDist_\trainingSet$ may be unknown and potentially too complicated to calculate expectations over. Therefore, an upper bound for this expression (which is uniform for a reasonable large class of $\sampleHypothesisDist_\trainingSet$) is sometimes used as $\generalBoundQuantity(\numTrainingSamples,\distanceFunc)$ instead.
\end{itemize}

One interesting aspect of the PAC-Bayes bound is that the full information about the learning algorithm $\sampleHypothesisDist_{\hypothesis|\trainingSet}$ is explicitly used in the bound through the divergence term $\kldiv{\sampleHypothesisDist_{\hypothesis|\trainingSet}}{\bayesPrior_\hypothesis}$, leading to a tight generalization error bound even for practical learning algorithms with large models such as deep neural networks, as illustrated in, e.g., \cite{dziugaite_computing_2017} and \cite{perez-ortiz_tighter_2021}.

A variation of PAC-Bayes bounds that are not necessarily valid with high probability but instead bound the expected generalization error $\Expectation_{\sampleHypothesisDist_{\trainingSet, \hypothesis}} \distanceFunc(\empiricalRisk(\hypothesis, \trainingSet), \risk(\hypothesis))$ have been called MAC-Bayes bounds (mean approximately correct) by \cite{grunwald2021pac}. Such bounds can be derived from PAC-Bayes bounds as variations, but sometimes it is possible to give a MAC-Bayes bound that is tighter than the corresponding expectation version of the PAC-Bayes bound. An example for a MAC-Bayes version of \eqref{eq:original-pac-bayes} would be
\begin{equation}
\label{eq:original-mac-bayes}
  \Expectation_{\sampleHypothesisDist_{\trainingSet}}
    \distanceFunc(\Expectation_{\sampleHypothesisDist_{\hypothesis|\trainingSet}}\empiricalRisk(\hypothesis, \trainingSample),\Expectation_{\sampleHypothesisDist_{\hypothesis|\trainingSet}} \risk(\hypothesis))
  \leq
  \frac{
      \Expectation_{\sampleHypothesisDist_{\trainingSet}}
        \kldiv{\sampleHypothesisDist_{\hypothesis|\trainingSet}}{\bayesPrior_\hypothesis}
    +
    \generalBoundQuantity'(\numTrainingSamples,\distanceFunc)
  }{\numTrainingSamples},
\end{equation}
where $\generalBoundQuantity'(\numTrainingSamples,\distanceFunc)$ is possibly slightly distinct from $\generalBoundQuantity(\numTrainingSamples,\distanceFunc)$ in \eqref{eq:original-pac-bayes} but also contains a moment-generating function and exhibits a similar behavior.

In this work, we study a generalization of the MAC-Bayes bound~\eqref{eq:original-mac-bayes}, which we call \textit{block-sample MAC-Bayes bound}, taking the form
\begin{equation}
\label{eq:simplyfied-block-mac-bayes}
  \Expectation_{\sampleHypothesisDist_{\trainingSet}}
    \distanceFunc(\Expectation_{\sampleHypothesisDist_{\hypothesis|\trainingSet}}\empiricalRisk(\hypothesis, \trainingSample),\Expectation_{\sampleHypothesisDist_{\hypothesis|\trainingSet}} \risk(\hypothesis))
  \leq
  \frac{
    \sum_{\blockIndex=1}^\numBlocks
      \Expectation_{\sampleHypothesisDist_{\sampleBlock{\blockIndex}}}
        \kldiv{\sampleHypothesisDist_{\hypothesis|\sampleBlock{\blockIndex}}}{\bayesPrior_\hypothesis}
    +
    \unspecQuantity(\numTrainingSamples,\distanceFunc,\numBlocks)
  }{\numTrainingSamples},
\end{equation}
where:
\begin{itemize}
  \item The training data $\trainingSet$ is partitioned into $\numBlocks$ \emph{blocks} $\sampleBlock{1}, \dots, \sampleBlock{\numBlocks}$.
  \item $\sampleHypothesisDist_{\hypothesis|\sampleBlock{\blockIndex}}$, defined as
  \begin{equation}
  \label{eq:def-conditional-partial}
  \sampleHypothesisDist_{\hypothesis|\sampleBlock{\blockIndex}} := \Expectation_{\sampleHypothesisDist_{\sampleBlock{1}, \dots, \sampleBlock{\blockIndex-1}, \sampleBlock{\blockIndex+1}, \dots, \sampleBlock{\numBlocks}}} \sampleHypothesisDist_{\hypothesis|\trainingSet}
  \end{equation}
  is the distribution of $\hypothesis$ conditional only on $\sampleBlock{\blockIndex}$. It is important to note here that $\sampleHypothesisDist_{\hypothesis|\sampleBlock{\blockIndex}}$ does not denote an alternative algorithm that generates a hypothesis from block $\sampleBlock{\blockIndex}$ only, but rather is an averaged form of the algorithm $\sampleHypothesisDist_{\hypothesis|\trainingSet}$.
  \item $\unspecQuantity(\numTrainingSamples,\distanceFunc,\numBlocks)$ is, similarly to $\generalBoundQuantity(\numTrainingSamples,\distanceFunc)$ above, a moment-generating function. An important thing to note here is that in addition to being dependent on $\numTrainingSamples, \distanceFunc$, and $\bayesPrior_\hypothesis$, it also depends on the number of blocks $\numBlocks$.
\end{itemize}

The focus of this paper is to introduce and investigate the properties of this new type of bounds, the optimal choice of block size, and what possibilities there exist to derive high-probability bounds of this type. Our finding shows that the obtained MAC-Bayes bounds can have right side terms that vanish at faster rates than that of the original PAC-Bayes bound or even vanish in some cases in which the original PAC-Bayes bound would be vacuous.

While ultimately the motivation for proposing a new type of bound is its potential to tighten existing PAC-Bayes based bounds for real-world learning algorithms such as the image classification example discussed above, substantial additional research will be necessary to solve many of the practical issues that arise when applying the bound to complex algorithms. In this paper, we limit ourselves to illustrating the promise of the bounds in the context of a very simple example, namely the estimation of a Gaussian mean under a truncated square loss. Specifically, in this example, $\sampleHypothesisDist_{\trainingSample} = \normalDist{\mean}{1}$ (the normal distribution with mean $\mu$ and variance $1$) and the loss function is given by
\[
\lossFunc(\hypothesisInst,\trainingSampleInst)
:=
\truncatorFunction\left(
  (\hypothesisInst-\trainingSampleInst)^2
\right),
~~
\truncatorFunction(\generalReal)
:=
\begin{cases}
\generalReal, &\generalReal \in [0,1) \\
1, &\generalReal \in [1,\infty).
\end{cases}
\]
The training algorithm $\sampleHypothesisDist_{\hypothesis|\trainingSet}$ is a Dirac distribution with probability mass at $\frac{1}{\numTrainingSamples}\sum_{\sampleIndex=1}^\numTrainingSamples \trainingSample_\sampleIndex$, i.e., the algorithm deterministically computes $\hypothesis := \frac{1}{\numTrainingSamples}\sum_{\sampleIndex=1}^\numTrainingSamples \trainingSample_\sampleIndex$. It turns out that for deterministic learning algorithms like this one, the original PAC-Bayes bound \eqref{eq:original-pac-bayes} and its MAC-Bayes version \eqref{eq:original-mac-bayes} are vacuous (the right hand side is infinite) regardless of the choice of prior $\bayesPrior_\hypothesis$ while the block-sample bound \eqref{eq:simplyfied-block-mac-bayes} with choice $\numBlocks > 1$ yields meaningful bounds.

We now briefly summarize the contribution of this paper.
\begin{itemize}
    \item We prove a block-sample MAC-Bayes bound (Theorem \ref{thm:PAC_Bayes_subset_complete_expectation}) which generalizes the MAC-Bayes version of the original PAC-Bayes bound. In Corollary \ref{cor:catoni} we show how this general result can be used to obtain a bound on the generalization error by substituting a specific choice of distance function. We also illustrate what happens if binary KL divergence or difference function are substituted instead. While these bounds are suboptimal compared to Corollary \ref{cor:catoni}, it is worth mentioning that substituting the difference function (Corollary \ref{cor:difference}) yields a bound with slightly wider applicability.
    \item The usefulness of the block-sample bounds is illustrated via an example in Section \ref{sec:gaussian-truncated-loss} where we show that the new bounds are in general tighter than the original PAC-Bayes bounds. The bound is also validated numerically. In Section~\ref{section:optimizing}, we generalize the scenario and illustrate how the growth behavior of the divergence term that appears in the bound influences the choice of $\blockSize$ that optimizes the bound order-wise.
    \item We explore the question of whether block-sample PAC-Bayes bounds are possible (i.e., a high probability version of the proposed block-sample MAC-Bayes bound). In Section~\ref{section:high-probability}, we answer this question in the negative, proving that in general, reasonably fast decaying bounds with a logarithmic dependence on the permissible error probability are not possible.
\end{itemize}

Like many other information-theoretic bounds (e. g. \cite{bu_tightening_2020} and \cite{negrea_information-theoretic_2019}), the bounds we propose in this work depend on the distribution of the training data. This means that our bound can potentially be tighter than distribution-independent bounds in cases where at least partial statistical knowledge about the training data is available, but it comes with the drawback that the bound may not be computable in cases in which such knowledge is not available. In Section~\ref{sec:limitation} we discuss the limitations that arise from this in more detail.

\section{Related works}
\label{sec:literature}
Numerous PAC-Bayes bounds have been developed since its conception.  For i.i.d. data, a framework was developed by \cite{germain_pac-bayesian_2009} and \cite{begin_pac-bayesian_2016}, where one class of PAC-Bayes bounds can be obtained systematically by choosing different comparator functions $d$. This framework allows us to recover well-known PAC-Bayes bounds from \cite{mcallester_pac-bayesian_1999} (improved by \cite{maurer_note_2004}), \cite{seeger_pac-bayesian_2002} and \cite{catoni_07}, etc. More recent work on PAC-Bayes bounds also deals with non-i.i.d. data. For example,  \cite{alquier_simpler_2018} established bounds for dependent, heavy-tailed observations, and \cite{haddouche_online_2022} considered a sequential model. PAC-Bayes bounds have recently received renewed interest partly due to their ability to provide tight generalization bounds for practical algorithms with large models. The idea of applying PAC-Bayes bounds to neural networks goes back to \cite{langford_not_2001}. More recently, it has been shown by \cite{dziugaite_computing_2017} and \cite{perez-ortiz_tighter_2021} that when combined with suitable optimization techniques, PAC-Bayes bounds can generate very tight bounds. The recent survey by \cite{alquier_user-friendly_2024} provides a comprehensive review of this topic.

MAC-Bayes bounds have appeared in the literature under various names, such as \emph{integrated PAC-Bayes bounds}, \emph{PAC-Bayes bounds in expectation}, or simply as \emph{PAC-Bayes bounds}. Early appearances can, e.g., be found in \cite{alquier2006transductive} and \cite{catoni_07}, but they have also been of interest more recently in \cite{grunwald2021pac}. MAC-Bayes bounds have been studied in cases in which nontrivial PAC-Bayes bounds are not possible. A brief overview of MAC-Bayes bounds is included in the recent survey by \cite{alquier_user-friendly_2024}.

A related line of works appeared later in the information theory literature, starting with the works \cite{russo_controlling_2016,xu_information-theoretic_2017}. Results similar in spirit to PAC-Bayes bounds are presented under the name \textit{information-theoretic bounds} on the generalization error. Works on information-theoretic bounds often focus on bounds in expectation, with many also proposing high-probability bounds. Examples of works that focus mostly on high-probability bounds are \cite{esposito_generalization_2021,hellstrom_generalization_2020}. Further variations and refinements of information-theoretic bounds on generalization error include \cite{asadi_chaining_2018,steinke_reasoning_2020,bu_tightening_2020,zhou_stochastic_2022,wu_fast_2022,wu_fast_2024}. A recent survey by \cite{hellstrom_generalization_2023} provides an overview.

Our block-sample bound is inspired by the \textit{individual-sample} information-theoretic bound due to \cite{bu_tightening_2020} in which the KL divergence terms are replaced by the mutual information of the individual samples contained in the training data and the hypothesis. The individual-sample bound is tighter than the original information-theoretic bound and has been applied to various learning scenarios, e.g., by \cite{rodriguez-galvez_random_2021,harutyunyan_information-theoretic_2021}. The work \cite{harutyunyan_information-theoretic_2021} also considers bounds based on the mutual information of the hypothesis with $\blockSize$ training samples drawn uniformly at random. In the present work, in contrast, the training set is divided into blocks of size $\blockSize$ and the conditional distribution of the hypothesis given each of them shows up in the information quantities in our bounds. In the follow-up work~\cite{harutyunyan2022formal}, the authors also give an impossibility result for high-probability statements which resembles Theorem~\ref{thm:counterexample} of the present paper, but is for the different system setup also used in \cite{harutyunyan_information-theoretic_2021}, is based on a fundamentally different idea, and intersects with our result only in the special case where the block size is $\blockSize=1$. For $\blockSize>1$, the impossibility result in \cite{harutyunyan2022formal} is not applicable. \cite{wu2024recursive} also study PAC-Bayes bounds based on a separation of the training sample into blocks, however, their bound differs from the ones proposed in this paper in that it has a recursive structure based on sequences of prior and posterior distributions, which makes it not directly comparable to our bound. PAC-Bayes bounds involving individual samples have been considered in some recent work including \cite{exact_tight_Zhou_2023,hellstrom_new_2022}. The recent works \cite{foong_how_2021,hellstrom_comparing_2023} studied the tightness of original PAC-Bayes bounds with different choices of the comparator function $\distanceFunc$, which are relevant to our current study. To the best knowledge of the authors, this paper is the first work that presents block-based MAC-Bayes bounds under a general framework, considers the optimization of the bound, and explores possible PAC-Bayes versions of the bound.

\section{Block-sample MAC-Bayes bounds}\label{sec:main_result}
In this section, we propose the general version of the block-sample MAC-Bayes bound along with specializations to some commonly used comparator functions $\distanceFunc$. In this bound, the training set $\trainingSet$ is partitioned into blocks $\sampleBlock{\blockIndex}:=\trainingSample_{(\blockIndex-1)\blockSize+1 : \blockIndex\blockSize} := (\trainingSample_{(\blockIndex-1)\blockSize+1} ,\dots, \trainingSample_{\blockIndex\blockSize})$.

\begin{theorem}[Block-sample MAC-Bayes bounds]
\label{thm:PAC_Bayes_subset_complete_expectation}
Let $\blockSize \in \naturalsInitialSegment{\numTrainingSamples} := \{1, \dots, \numTrainingSamples\}$, assume that $\numTrainingSamples$ is an integer multiple of $\blockSize$, and let $\trainingSet'=(\trainingSample'_1,\ldots, \trainingSample'_{\blockSize})$ where $\trainingSample'_1\ldots, \trainingSample'_\blockSize$ are i.i.d. drawn from $\sampleHypothesisDist_\trainingSample$. Assume that for $\mgfVar'\in (0,\intervalBoundUpper)$ and some distribution $\bayesPrior_\hypothesis$ over $\hypothesisSpace$, it holds that
\begin{equation}
\label{eq:main_thm_mgf}
\Expectation_{\sampleHypothesisDist_{\trainingSet'} \bayesPrior_\hypothesis}
  \exp\left(
    \mgfVar'
    \distanceFunc\left(
      \frac{1}{\blockSize}
      \sum_{\sampleIndex=1}^{\blockSize}
        \lossFunc(\hypothesis,\trainingSample'_\sampleIndex),
      \risk(\hypothesis)
    \right)
  \right)
  \leq
  \mgfFunc_\blockSize(\mgfVar')
\end{equation}
for some function $\mgfFunc_\blockSize: (0,\intervalBoundUpper) \rightarrow (0,\infty)$. Then for any $\mgfVar \in (0,\intervalBoundUpper\numTrainingSamples/\blockSize)$, it holds that
\begin{equation}
\label{eq:main_thm_bound}
\Expectation_{\sampleHypothesisDist_\trainingSet}
  \distanceFunc\left(
    \Expectation_{\sampleHypothesisDist_{\hypothesis|\trainingSet}}\empiricalRisk(\hypothesis,\trainingSet),
    \Expectation_{\sampleHypothesisDist_{\hypothesis|\trainingSet}}\risk(\hypothesis)
  \right)
\leq
\frac{
       \frac{\numTrainingSamples}{\blockSize} \log \mgfFunc_{\blockSize}\left(\frac{\mgfVar\blockSize}{\numTrainingSamples}\right)
       +
       \sum_{\blockIndex=1}^{\numBlocks}
         \Expectation_{\sampleHypothesisDist_{\sampleBlock{\blockIndex}}}
          \kldiv{\sampleHypothesisDist_{\hypothesis|\sampleBlock{\blockIndex}}}
                                       {\bayesPrior_\hypothesis}
     }
     {\mgfVar},
\end{equation}
where $\numBlocks := \numTrainingSamples/\blockSize$.
\end{theorem}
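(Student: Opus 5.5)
Write the empirical risk as an average over blocks, $\empiricalRisk(\hypothesis,\trainingSet) = \frac{1}{\numBlocks}\sum_{\blockIndex=1}^{\numBlocks} \empiricalRisk(\hypothesis,\sampleBlock{\blockIndex})$, where $\empiricalRisk(\hypothesis,\sampleBlock{\blockIndex})$ is the average loss over the $\blockSize$ samples of block $\blockIndex$. Since $\distanceFunc$ is jointly convex, we have
\[
\distanceFunc\left(\Expectation_{\sampleHypothesisDist_{\hypothesis|\trainingSet}}\empiricalRisk(\hypothesis,\trainingSet), \Expectation_{\sampleHypothesisDist_{\hypothesis|\trainingSet}}\risk(\hypothesis)\right) \leq \frac{1}{\numBlocks}\sum_{\blockIndex=1}^{\numBlocks} \Expectation_{\sampleHypothesisDist_{\hypothesis|\trainingSet}} \distanceFunc\left(\empiricalRisk(\hypothesis,\sampleBlock{\blockIndex}),\risk(\hypothesis)\right).
\]
This uses Jensen over the block average and also over $\sampleHypothesisDist_{\hypothesis|\trainingSet}$. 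Taking $\Expectation_{\sampleHypothesisDist_\trainingSet}$, the $\blockIndex$-th term equals $\Expectation_{\sampleHypothesisDist_{\sampleBlock{\blockIndex}}}\Expectation_{\sampleHypothesisDist_{\hypothesis|\sampleBlock{\blockIndex}}}\distanceFunc(\empiricalRisk(\hypothesis,\sampleBlock{\blockIndex}),\risk(\hypothesis))$. The reason is that the integrand depends on $\trainingSet$ only through $\sampleBlock{\blockIndex}$ and $\hypothesis$, so integrating out the other blocks, which are independent of $\sampleBlock{\blockIndex}$, gives exactly the averaged kernel $\sampleHypothesisDist_{\hypothesis|\sampleBlock{\blockIndex}}$ defined in \eqref{eq:def-conditional-partial}.

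Next, apply the Donsker--Varadhan variational formula to each block with $\lambda' := \mgfVar\blockSize/\numTrainingSamples = \mgfVar/\numBlocks \in (0,\intervalBoundUpper)$. For fixed $\sampleBlock{\blockIndex}$,
\[
\lambda'\,\Expectation_{\sampleHypothesisDist_{\hypothesis|\sampleBlock{\blockIndex}}}\distanceFunc(\empiricalRisk(\hypothesis,\sampleBlock{\blockIndex}),\risk(\hypothesis)) \leq \kldiv{\sampleHypothesisDist_{\hypothesis|\sampleBlock{\blockIndex}}}{\bayesPrior_\hypothesis} + \log \Expectation_{\bayesPrior_\hypothesis} \exp\left(\lambda'\distanceFunc(\empiricalRisk(\hypothesis,\sampleBlock{\blockIndex}),\risk(\hypothesis))\right).
\]
Take $\Expectation_{\sampleHypothesisDist_{\sampleBlock{\blockIndex}}}$ of both sides. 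Move the expectation inside the logarithm using Jensen (concavity of $\log$), and note that $\sampleBlock{\blockIndex}$ has the same law as $\trainingSet'$. Together with hypothesis \eqref{eq:main_thm_mgf}, this bounds the last term by $\log\mgfFunc_\blockSize(\lambda')$.

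Finally, sum over the $\numBlocks$ blocks, divide by $\numBlocks$, and multiply through by $1/\lambda' = \numBlocks/\mgfVar$. The factor $1/\numBlocks$ from the convexity step cancels against $\numBlocks$, so the left side is bounded by $\frac{1}{\mgfVar}\left(\numBlocks\log\mgfFunc_\blockSize(\mgfVar\blockSize/\numTrainingSamples) + \sum_{\blockIndex} \Expectation_{\sampleHypothesisDist_{\sampleBlock{\blockIndex}}}\kldiv{\sampleHypothesisDist_{\hypothesis|\sampleBlock{\blockIndex}}}{\bayesPrior_\hypothesis}\right)$. Since $\numBlocks = \numTrainingSamples/\blockSize$, this is exactly \eqref{eq:main_thm_bound}.

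The main obstacle is the step replacing $\sampleHypothesisDist_{\hypothesis|\trainingSet}$ by $\sampleHypothesisDist_{\hypothesis|\sampleBlock{\blockIndex}}$. It requires the convexity step to be done first, so that each term involves only one block; Donsker--Varadhan is then applied to the marginalized kernel, not the full posterior, and that is where the block KL terms come from. Minor technicalities also need checking: measurability and the existence of the regular conditional $\sampleHypothesisDist_{\hypothesis|\sampleBlock{\blockIndex}}$, the case of infinite KL (trivial), and integrability, so that Donsker--Varadhan applies with possibly negative $\distanceFunc$.
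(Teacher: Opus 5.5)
Your proposal is correct and follows essentially the same route as the paper's proof. Both use Jensen's inequality twice (over $P_{W|S}$ and over the block average) to reduce to per-block terms, then Fubini--Tonelli to replace $P_{W|S}$ by the averaged kernel $P_{W|S_j}$, then Donsker--Varadhan with $\lambda' = \lambda m/n$, and finally Jensen on the logarithm before invoking \eqref{eq:main_thm_mgf}.
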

\begin{proof}
The proof uses the Donsker-Varadhan variational representation of KL divergence to achieve a change of measure from the marginalized version of the algorithm $\sampleHypothesisDist_{\hypothesis|\sampleBlock{\blockIndex}}$ to the prior $\bayesPrior_\hypothesis$. Specifically, we argue
\begin{align*}
&\hphantom{{}={}}
\Expectation_{\sampleHypothesisDist_\trainingSet}
  \distanceFunc\left(
    \Expectation_{\sampleHypothesisDist_{\hypothesis|\trainingSet}}\empiricalRisk(\hypothesis,\trainingSet),
    \Expectation_{\sampleHypothesisDist_{\hypothesis|\trainingSet}}\risk(\hypothesis)
  \right)
\\
\overset{(a)}&{\leq}
\Expectation_{\sampleHypothesisDist_\trainingSet}
\Expectation_{\sampleHypothesisDist_{\hypothesis|\trainingSet}}
  \distanceFunc\left(
    \empiricalRisk(\hypothesis,\trainingSet),
    \risk(\hypothesis)
  \right)
\displaybreak[0] \\
&=
\Expectation_{\sampleHypothesisDist_\trainingSet}
\Expectation_{\sampleHypothesisDist_{\hypothesis|\trainingSet}}
  \distanceFunc\left(
    \frac{1}{\numBlocks}
    \sum_{\blockIndex=1}^{\numBlocks}
      \frac{1}{\blockSize}
      \sum_{\sampleIndex = (\blockIndex-1)\blockSize + 1}^{\blockIndex\blockSize}
        \lossFunc(\hypothesis,\trainingSample_\sampleIndex),
    \risk(\hypothesis)
  \right)
\displaybreak[0] \\
\overset{(a)}&{\leq}
\Expectation_{\sampleHypothesisDist_\trainingSet}
\Expectation_{\sampleHypothesisDist_{\hypothesis|\trainingSet}}\left(
  \frac{1}{\numBlocks}
  \sum_{\blockIndex=1}^{\numBlocks}
    \distanceFunc\left(
      \frac{1}{\blockSize}
      \sum_{\sampleIndex = (\blockIndex-1)\blockSize + 1}^{\blockIndex\blockSize}
        \lossFunc(\hypothesis,\trainingSample_\sampleIndex),
      \risk(\hypothesis)
    \right)
  \right)
\displaybreak[0] \\
\overset{(b)}&{=}
\sum_{\blockIndex=1}^{\numBlocks}
  \Expectation_{\sampleHypothesisDist_{\trainingSet_\blockIndex}}\left(
    \Expectation_{\sampleHypothesisDist_{\sampleBlock{1},\dots,\sampleBlock{\blockIndex-1},\sampleBlock{\blockIndex+1},\dots,\sampleBlock{\numBlocks}}}
    \Expectation_{\sampleHypothesisDist_{\hypothesis|\trainingSet}}\left(
      \frac{1}{\numBlocks}
      \distanceFunc\left(
        \frac{1}{\blockSize}
        \sum_{\sampleIndex = (\blockIndex-1)\blockSize + 1}^{\blockIndex\blockSize}
          \lossFunc(\hypothesis,\trainingSample_\sampleIndex),
        \risk(\hypothesis)
      \right)
    \right)
  \right)
\displaybreak[0] \\
\overset{\eqref{eq:def-conditional-partial}}&{=}
\sum_{\blockIndex=1}^{\numBlocks}
  \Expectation_{\sampleHypothesisDist_{\sampleBlock{\blockIndex}}}\left(
    \Expectation_{\sampleHypothesisDist_{\hypothesis|\trainingSet_\blockIndex}}\left(
        \frac{1}{\numBlocks}
        \distanceFunc\left(
          \frac{1}{\blockSize}
          \sum_{\sampleIndex = (\blockIndex-1)\blockSize + 1}^{\blockIndex\blockSize}
            \lossFunc(\hypothesis,\trainingSample_\sampleIndex),
          \risk(\hypothesis)
        \right)
      \right)
  \right)
\displaybreak[0] \\
&=
\sum_{\blockIndex=1}^{\numBlocks}
  \Expectation_{\sampleHypothesisDist_{\sampleBlock{\blockIndex}}}\left(
    \frac{1}{\mgfVar}
    \Expectation_{\sampleHypothesisDist_{\hypothesis|\trainingSet_\blockIndex}}\left(
      \frac{\mgfVar}{\numBlocks}
      \distanceFunc\left(
        \frac{1}{\blockSize}
        \sum_{\sampleIndex = (\blockIndex-1)\blockSize + 1}^{\blockIndex\blockSize}
          \lossFunc(\hypothesis,\trainingSample_\sampleIndex),
        \risk(\hypothesis)
      \right)
    \right)
  \right)
\displaybreak[0] \\
\overset{(c)}&{\leq}
\begin{multlined}[t]
\sum_{\blockIndex=1}^{\numBlocks}
  \Expectation_{\sampleHypothesisDist_{\sampleBlock{\blockIndex}}}\vast(
    \frac{1}{\mgfVar}\vast(
      \log \Expectation_{\bayesPrior_\hypothesis}
        \exp\left(
          \frac{\mgfVar}
               {\numBlocks}
          \distanceFunc\left(
            \frac{1}{\blockSize}
            \sum_{\sampleIndex = (\blockIndex-1)\blockSize + 1}^{\blockIndex\blockSize}
              \lossFunc(\hypothesis,\trainingSample_\sampleIndex),
            \risk(\hypothesis)
          \right)
        \right)
      \\
      \hspace{8cm}+
      \kldiv{\sampleHypothesisDist_{\hypothesis|\sampleBlock{\blockIndex}}}
            {\bayesPrior_\hypothesis}
    \vast)
  \vast)
\end{multlined}
\displaybreak[0] \\
\overset{(d)}&{\leq}
\sum_{\blockIndex=1}^{\numBlocks}
  \frac{
    \log \Expectation_{\sampleHypothesisDist_{\sampleBlock{\blockIndex}}} \Expectation_{\bayesPrior_\hypothesis}
      \exp\left(
        \frac{\mgfVar}
            {\numBlocks}
        \distanceFunc\left(
          \frac{1}{\blockSize}
          \sum_{\sampleIndex = (\blockIndex-1)\blockSize + 1}^{\blockIndex\blockSize}
            \lossFunc(\hypothesis,\trainingSample_\sampleIndex),
          \risk(\hypothesis)
        \right)
      \right)
    +
    \Expectation_{\sampleHypothesisDist_{\sampleBlock{\blockIndex}}}
      \kldiv{\sampleHypothesisDist_{\hypothesis|\sampleBlock{\blockIndex}}}
            {\bayesPrior_\hypothesis}
  }{
    \mgfVar
  }
\displaybreak[0] \\
\overset{\eqref{eq:main_thm_mgf}}&{\leq}
\sum_{\blockIndex=1}^{\numBlocks}
  \frac{
    \log \mgfFunc_\blockSize\left(
      \frac{\mgfVar}
           {\numBlocks}
    \right)
    +
    \Expectation_{\sampleHypothesisDist_{\sampleBlock{\blockIndex}}}
      \kldiv{\sampleHypothesisDist_{\hypothesis|\sampleBlock{\blockIndex}}}
            {\bayesPrior_\hypothesis}
  }{
    \mgfVar
  }
\\
&=
\frac{
  \numBlocks
  \log \mgfFunc_\blockSize\left(
    \frac{\mgfVar}
          {\numBlocks}
  \right)
  +
  \sum_{\blockIndex=1}^{\numBlocks}
    \Expectation_{\sampleHypothesisDist_{\sampleBlock{\blockIndex}}}
      \kldiv{\sampleHypothesisDist_{\hypothesis|\sampleBlock{\blockIndex}}}
            {\bayesPrior_\hypothesis}
}{
  \mgfVar
},
\end{align*}
where both steps labeled (a) are applications of Jensen's inequality which make use of the convexity of $\distanceFunc$, (b) is due to the Fubini-Tonelli theorem, (c) is an application of the Donsker-Varadhan inequality
\[
\Expectation_{\sampleHypothesisDist_{\hypothesis|\sampleBlock{\blockIndex}}}
  \dvAuxFunc(\hypothesis)
\leq
\log \Expectation_{\bayesPrior_\hypothesis}
  \exp(\dvAuxFunc(\hypothesis))
+
\kldiv{\sampleHypothesisDist_{\hypothesis|\sampleBlock{\blockIndex}}}
      {\bayesPrior_\hypothesis}
\]
with
\[
\dvAuxFunc(\hypothesis)
:=
\frac{\mgfVar}{\numBlocks}
\distanceFunc\left(
  \frac{1}{\blockSize}
  \sum_{\sampleIndex = (\blockIndex-1)\blockSize + 1}^{\blockIndex\blockSize}
    \lossFunc(\hypothesis,\trainingSample_\sampleIndex),
    \risk(\hypothesis)
\right),
\]
and (d) uses Jensen's inequality with the concavity of the logarithm.
\end{proof}

\begin{remark}
Condition \eqref{eq:main_thm_mgf} can be understood as assuming an upper bound on the moment generating function of the loss function. As can be seen in the proofs of Corollaries~\ref{cor:catoni} and \ref{cor:difference}, it is always satisfied for bounded and subgaussian losses.
\end{remark}

\begin{remark}[Data dependent priors]
It can be verified in the proof of Theorem~\ref{thm:PAC_Bayes_subset_complete_expectation} that the result also holds if $\bayesPrior_\hypothesis$ is replaced in \eqref{eq:main_thm_mgf} with a data dependent prior $\bayesPrior_{\hypothesis|\trainingSet'}$. The prior will then appear in \eqref{eq:main_thm_bound} as $\bayesPrior_{\hypothesis|\sampleBlock{\blockIndex}}$. In the remainder of this paper, we will focus on the case that $\bayesPrior_\hypothesis$ does not have any dependence on data.
\end{remark}

Theorem~\ref{thm:PAC_Bayes_subset_complete_expectation} can be specialized by substituting various choices for $\distanceFunc$. In this paper, we look at bounds in terms of the comparators $\distanceFunc(\generalDistanceFuncArgumentOne,\generalDistanceFuncArgumentTwo) := \catoniFunc{\catoniPar}(\generalDistanceFuncArgumentOne,\generalDistanceFuncArgumentTwo) := -\log\Big(1-\big(1-\exp(-\catoniPar)\big)\generalDistanceFuncArgumentTwo\Big) - \catoniPar \generalDistanceFuncArgumentOne$ (this is the \emph{Catoni function} which was introduced as a comparator by \cite{catoni_07}), $\distanceFunc(\generalDistanceFuncArgumentOne,\generalDistanceFuncArgumentTwo) :=\binaryKL{\generalDistanceFuncArgumentOne}{\generalDistanceFuncArgumentTwo}:=\generalDistanceFuncArgumentTwo\log\frac{\generalDistanceFuncArgumentTwo}{\generalDistanceFuncArgumentOne}+(1-\generalDistanceFuncArgumentTwo)\log\frac{1-\generalDistanceFuncArgumentTwo}{1-\generalDistanceFuncArgumentOne}$ where we define $0\log 0 := 0$ by convention (this is the \emph{binary KL function} which is defined for $\generalDistanceFuncArgumentOne,\generalDistanceFuncArgumentTwo\in[0,1]$), and $\distanceFunc(\generalDistanceFuncArgumentOne,\generalDistanceFuncArgumentTwo) := \generalDistanceFuncArgumentTwo-\generalDistanceFuncArgumentOne$ (the difference function). If the result is in terms of the difference function, it directly bounds the expected generalization error $\generalizationGap := \Expectation_{\sampleHypothesisDist_{\trainingSet,\hypothesis}}\big(\risk(\hypothesis) - \empiricalRisk(\hypothesis,\trainingSample)\big)$.

The following corollary shows how Theorem~\ref{thm:PAC_Bayes_subset_complete_expectation} can be used to upper bound the generalization error of a learning algorithm in the case of bounded loss. It follows from Theorem~\ref{thm:PAC_Bayes_subset_complete_expectation} by substituting $\distanceFunc(\generalDistanceFuncArgumentOne,\generalDistanceFuncArgumentTwo) := \catoniFunc{\catoniPar}(\generalDistanceFuncArgumentOne,\generalDistanceFuncArgumentTwo)$. Full details of the proof can be found in Appendix~\ref{appendix:main-section-corollaries}.

\begin{corollary}
\label{cor:catoni}
Let $\lossFunc(\hypothesisInst,\trainingSampleInst) \in [0,1]$ for all $\hypothesisInst\in\hypothesisSpace, \trainingSampleInst\in\featureLabelSpace$. Then for any $\catoniPar>0$, we have
\begin{equation}
\label{eq:cor-catoni}
\Expectation_{\sampleHypothesisDist_\trainingSet}
  \catoniFunc{\catoniPar}\left(
    \Expectation_{\sampleHypothesisDist_{\hypothesis|\trainingSet}}\empiricalRisk(\hypothesis,\trainingSet),
    \Expectation_{\sampleHypothesisDist_{\hypothesis|\trainingSet}}\risk(\hypothesis)
  \right)
\leq
\frac{1}{\numTrainingSamples}
\sum_{\blockIndex=1}^{\numBlocks}
  \Expectation_{\sampleHypothesisDist_{\sampleBlock{\blockIndex}}}
  \kldiv{\sampleHypothesisDist_{\hypothesis|\sampleBlock{\blockIndex}}}
                                {\bayesPrior_\hypothesis}.
\end{equation}
Furthermore,
\begin{align}
\label{eq:cor-catoni-kl}
  \binaryKL{
    \Expectation_{\sampleHypothesisDist_{\trainingSet,\hypothesis}}\empiricalRisk(\hypothesis,\trainingSet)
  }{
    \Expectation_{\sampleHypothesisDist_{\trainingSet,\hypothesis}}\risk(\hypothesis)
  }
&\leq
\frac{1}{\numTrainingSamples}
\sum_{\blockIndex=1}^{\numBlocks}
  \Expectation_{\sampleHypothesisDist_{\sampleBlock{\blockIndex}}}
    \kldiv{\sampleHypothesisDist_{\hypothesis|\sampleBlock{\blockIndex}}}
                                  {\bayesPrior_\hypothesis}
\\
\label{eq:cor-catoni-diff}
\generalizationGap
&\leq
\sqrt{
  \frac{
    1
  }{
    4\numTrainingSamples
  }
  \sum_{\blockIndex=1}^{\numBlocks}
    \Expectation_{\sampleHypothesisDist_{\sampleBlock{\blockIndex}}}
    \kldiv{\sampleHypothesisDist_{\hypothesis|\sampleBlock{\blockIndex}}}
          {\bayesPrior_\hypothesis}
}.
\end{align}
\end{corollary}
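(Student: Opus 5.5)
We apply Theorem~\ref{thm:PAC_Bayes_subset_complete_expectation} with $\distanceFunc := \catoniFunc{\catoniPar}$ and a specific choice of $\mgfVar$.

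\textbf{Setup.} First, $\catoniFunc{\catoniPar}$ is jointly convex: it is linear in $\generalDistanceFuncArgumentOne$, and $\generalDistanceFuncArgumentTwo \mapsto -\log(1-c\generalDistanceFuncArgumentTwo)$ with $c = 1-e^{-\catoniPar}\in(0,1)$ is convex on $[0,1]$. A sum of a convex function of $\generalDistanceFuncArgumentTwo$ and a linear function of $\generalDistanceFuncArgumentOne$ is jointly convex.

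\textbf{Moment condition.} Next we verify \eqref{eq:main_thm_mgf} with $\mgfVar' = \catoniPar\blockSize$ and $\mgfFunc_\blockSize \equiv 1$. Fix $\hypothesisInst$ and write $\riskInst = \risk(\hypothesisInst)$. By independence of the $\trainingSample'_\sampleIndex$,
\[
\Expectation \exp\Big(-\catoniPar \sum_{\sampleIndex=1}^{\blockSize} \lossFunc(\hypothesisInst,\trainingSample'_\sampleIndex)\Big)
= \prod_{\sampleIndex=1}^{\blockSize} \Expectation e^{-\catoniPar\lossFunc(\hypothesisInst,\trainingSample'_\sampleIndex)}
\le \big(1-(1-e^{-\catoniPar})\riskInst\big)^{\blockSize}.
\]
The inequality uses convexity of $x\mapsto e^{-\catoniPar x}$ on $[0,1]$, which gives $e^{-\catoniPar x}\le 1-(1-e^{-\catoniPar})x$. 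Multiplying by $\exp\big(-\blockSize\log(1-(1-e^{-\catoniPar})\riskInst)\big)$ shows that the $\hypothesisInst$-conditional moment generating function of $\catoniPar\blockSize\,\catoniFunc{\catoniPar}(\cdot,\riskInst)$ is at most $1$. Averaging over $\bayesPrior_\hypothesis$ preserves this.

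\textbf{Choice of $\mgfVar$.} The theorem evaluates $\mgfFunc_\blockSize$ at $\mgfVar\blockSize/\numTrainingSamples$. We therefore need $\mgfVar\blockSize/\numTrainingSamples = \catoniPar\blockSize$, i.e.\ $\mgfVar = \catoniPar\numTrainingSamples$. The theorem is stated for an interval of $\mgfVar'$, but its proof only uses \eqref{eq:main_thm_mgf} at the single point $\mgfVar/\numBlocks$, so a pointwise bound suffices. The log-MGF term then vanishes, and \eqref{eq:main_thm_bound} gives
\[
\Expectation_{\sampleHypothesisDist_\trainingSet}\catoniFunc{\catoniPar}(\cdot,\cdot)
\le \frac{1}{\catoniPar\numTrainingSamples}\sum_{\blockIndex=1}^{\numBlocks}\Expectation_{\sampleHypothesisDist_{\sampleBlock{\blockIndex}}}\kldiv{\sampleHypothesisDist_{\hypothesis|\sampleBlock{\blockIndex}}}{\bayesPrior_\hypothesis}.
\]
This has an extra factor $1/\catoniPar$ relative to \eqref{eq:cor-catoni}. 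I expect this normalization mismatch to be the main obstacle: either \eqref{eq:cor-catoni} is meant for the scaled comparator $\catoniFunc{\catoniPar}/\catoniPar$, or the factor is absorbed when optimizing over $\catoniPar$. I would carry the factor $1/\catoniPar$ honestly and check that it is consistent with the two consequences below.

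\textbf{Proof of \eqref{eq:cor-catoni-kl}.} Apply Jensen's inequality once more, using joint convexity of $\catoniFunc{\catoniPar}$, to move $\Expectation_{\sampleHypothesisDist_\trainingSet}$ inside the comparator. This gives the bound for the overall means $\hat{\riskInst} = \Expectation\empiricalRisk$ and $\riskInst = \Expectation\risk$. Then use the known identity
\[
\sup_{\catoniPar>0}\big(-\log(1-(1-e^{-\catoniPar})\riskInst) - \catoniPar\hat{\riskInst}\big) = \binaryKL{\hat{\riskInst}}{\riskInst}
\quad\text{for } \hat{\riskInst}\le\riskInst,
\]
which is the Legendre dual of the Bernoulli log-MGF. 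Taking the supremum over $\catoniPar$ yields \eqref{eq:cor-catoni-kl}. The case $\hat{\riskInst}>\riskInst$ is trivial for \eqref{eq:cor-catoni-diff}, since then $\generalizationGap<0$.

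\textbf{Proof of \eqref{eq:cor-catoni-diff}.} This follows from \eqref{eq:cor-catoni-kl} by Pinsker's inequality $\binaryKL{\hat{\riskInst}}{\riskInst}\ge 2(\riskInst-\hat{\riskInst})^2$. That gives $\generalizationGap\le\sqrt{\tfrac{1}{2\numTrainingSamples}\sum_{\blockIndex}\Expectation_{\sampleHypothesisDist_{\sampleBlock{\blockIndex}}}\kldiv{\sampleHypothesisDist_{\hypothesis|\sampleBlock{\blockIndex}}}{\bayesPrior_\hypothesis}}$, i.e.\ the constant $1/2$ rather than the stated $1/4$. I have not found an argument that produces the stated $1/4$, so this step should be checked against the intended normalization. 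The $1/\catoniPar$ factor and the exact Pinsker constant are the places where the bookkeeping must be reconciled with the stated form.
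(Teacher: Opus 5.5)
\textbf{The $1/\beta$ factor is your own bookkeeping error.} It does not come from the statement. The quantity you actually bound is $(1-(1-e^{-\beta})r)^{-\blockSize}\,\mathbb{E}\exp(-\beta\sum_i\ell(w,Z_i'))$. Its exponent is $-\blockSize\log(1-(1-e^{-\beta})r)-\beta\sum_i\ell(w,Z_i')=\blockSize\,C_\beta(\frac{1}{\blockSize}\sum_i\ell(w,Z_i'),r)$.

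So you have verified \eqref{eq:main_thm_mgf} at $\lambda'=\blockSize$ with $\Phi_\blockSize(\blockSize)=1$, not at $\lambda'=\beta\blockSize$. The claim at $\lambda'=\beta\blockSize$ is false for $\beta>1$. For a Bernoulli$(1/2)$ loss with $\blockSize=1$ and $\beta=2$, $\mathbb{E}\exp(2C_2)=\bigl(\frac{2}{1+e^{-2}}\bigr)^2\frac{1+e^{-4}}{2}\approx 1.58$.

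Consequently your intermediate bound with $1/(\beta\numTrainingSamples)$ fails for $\beta>1$. Take $\numTrainingSamples=\blockSize=1$, $Z$ uniform on $\{0,1\}$, $W=Z$, $\ell(w,z)=\indicator{w\neq z}$ and $Q$ uniform. Then $\mathbb{E}D=\log 2$, whereas $C_2(0,\frac12)=\log\frac{2}{1+e^{-2}}\approx 0.566>\frac{\log 2}{2}$.

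The fix is to use $\lambda'=\blockSize$ and take $\lambda=\numTrainingSamples$, so that $\lambda\blockSize/\numTrainingSamples=\blockSize$. The log-MGF term then vanishes and \eqref{eq:cor-catoni} follows exactly; this is the paper's proof. The fix also matters for your step toward \eqref{eq:cor-catoni-kl}: taking $\sup_\beta$ only gives a bound once the right-hand side is free of $\beta$.

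\textbf{Your moment route differs from the paper's and is fine.} You use the chord bound $e^{-\beta x}\le 1-(1-e^{-\beta})x$ together with independence, where the paper uses Maurer's lemma and the binomial MGF. Your route is more elementary and suffices because $\exp(\blockSize C_\beta)$ factorizes over the samples.

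\textbf{One small hole remains.} You prove \eqref{eq:cor-catoni-kl} only when $\mathbb{E}\hat L\le\mathbb{E}L$. For the other case, apply the one-sided result to the loss $1-\ell$. The divergence terms do not involve the loss, and $\mathrm{kl}(\hat r\|r)=\mathrm{kl}(1-\hat r\|1-r)$.

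\textbf{On the constant in \eqref{eq:cor-catoni-diff}, you are right and the paper is wrong.} Its proof writes Pinsker as $\mathrm{gen}\le\frac12\sqrt{\mathrm{kl}}$, but binary Pinsker is $\mathrm{kl}(\hat r\|r)\ge 2(r-\hat r)^2$. That gives exactly your $\sqrt{\frac{1}{2\numTrainingSamples}\sum_j\mathbb{E}D}$, which matches Corollary~\ref{cor:difference} at $\sigma^2=1/4$.

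The stated $\frac{1}{4\numTrainingSamples}$ version is false. In the example above, $\mathrm{gen}=\frac12$ while $\sqrt{(\log 2)/4}\approx 0.416$. Meanwhile \eqref{eq:cor-catoni-kl} holds there with equality, since $\mathrm{kl}(0\|\frac12)=\log 2$, so the entire loss occurs in the Pinsker step. Do not look for an argument producing $1/4$; your constant is the correct one.
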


\begin{remark}
\label{remark:kl}
Under the assumption $\lossFunc(\hypothesisInst,\trainingSampleInst) \in [0,1]$ for all $\hypothesisInst\in\hypothesisSpace, \trainingSampleInst\in\featureLabelSpace$, we can of course also directly substitute $\distanceFunc(\generalDistanceFuncArgumentOne,\generalDistanceFuncArgumentTwo):=\binaryKL{\generalDistanceFuncArgumentOne}{\generalDistanceFuncArgumentTwo)}$ in Theorem~\ref{thm:PAC_Bayes_subset_complete_expectation} in an attempt to obtain a bound akin to \eqref{eq:cor-catoni-kl}. This would give us (for full details see Appendix~\ref{appendix:main-section-corollaries})
\begin{equation}
\label{eq:cor-kl}
\Expectation_{\sampleHypothesisDist_\trainingSet}
  \binaryKL{
    \Expectation_{\sampleHypothesisDist_{\hypothesis|\trainingSet}}\empiricalRisk(\hypothesis,\trainingSet)
  }{
    \Expectation_{\sampleHypothesisDist_{\hypothesis|\trainingSet}}\risk(\hypothesis)
  }
\leq
\frac{\log (2\sqrt{\blockSize})}
     {\blockSize}
+
\frac{1}{\numTrainingSamples}
\sum_{\blockIndex=1}^{\numBlocks}
  \Expectation_{\sampleHypothesisDist_{\sampleBlock{\blockIndex}}}
    \kldiv{\sampleHypothesisDist_{\hypothesis|\sampleBlock{\blockIndex}}}
                                  {\bayesPrior_\hypothesis}.
\end{equation}
However, it is easy to see that after an application of Jensen's inequality this yields a bound that is less tight than \eqref{eq:cor-catoni-kl} due to the presence of the additional summand $\log (2\sqrt{\blockSize})/\blockSize$. In terms of generalization error, this would yield the suboptimal bound
\begin{equation}
\label{eq:cor-kl-diff}
\generalizationGap
\leq
\frac{1}{2}
\sqrt{
  \frac{\log (2\sqrt{\blockSize})}
     {\blockSize}
  +
  \frac{
    1
  }{
    \numTrainingSamples
  }
  \sum_{\blockIndex=1}^{\numBlocks}
    \Expectation_{\sampleHypothesisDist_{\sampleBlock{\blockIndex}}}
    \kldiv{\sampleHypothesisDist_{\hypothesis|\sampleBlock{\blockIndex}}}
          {\bayesPrior_\hypothesis}
}.
\end{equation}
\end{remark}

We conclude this section with a corollary that shows how Theorem~\ref{thm:PAC_Bayes_subset_complete_expectation} can be applied in case the loss is not necessarily bounded. For this, we replace the boundedness assumption with the less general assumption that the loss is subgaussian and substitute $\distanceFunc(\generalDistanceFuncArgumentOne,\generalDistanceFuncArgumentTwo) = \generalDistanceFuncArgumentTwo - \generalDistanceFuncArgumentOne$ in Theorem~\ref{thm:PAC_Bayes_subset_complete_expectation} to obtain the following result. The further details of the proof can be found in Appendix~\ref{appendix:main-section-corollaries}.

\begin{corollary}
\label{cor:difference}
Assume that the loss function $\lossFunc(\hypothesisInst,\trainingSample)$ is $\subgaussnorm^2$-sub\-gaussian for any $\hypothesisInst$ under the  distribution $\sampleHypothesisDist_\trainingSample$, namely,
\[
\Expectation_{\sampleHypothesisDist_\trainingSample}
  \exp(
    \mgfVar'
    (\lossFunc(\hypothesisInst,\trainingSample)-\risk(\hypothesisInst))
  )
  \leq
  \exp(\subgaussnorm^2\mgfVar'^2/2)
\]
for all $\mgfVar'\in\reals$. Then for any $\mgfVar>0$ and a prior distribution $\bayesPrior_\hypothesis$ over $\hypothesisSpace$, it holds that
\begin{equation}
\label{eq:cor-difference-bound}
\generalizationGap
\leq
\sqrt{
  \frac{
    2\subgaussnorm^2
  }{
    \numTrainingSamples
  }
  \sum_{\blockIndex=1}^{\numBlocks}
    \Expectation_{\sampleHypothesisDist_{\sampleBlock{\blockIndex}}}
    \kldiv{\sampleHypothesisDist_{\hypothesis|\sampleBlock{\blockIndex}}}
          {\bayesPrior_\hypothesis}
}.
\end{equation}
The same result holds if we assume the (weaker) condition
$
\Expectation_{\sampleHypothesisDist_\trainingSample \bayesPrior_\hypothesis}
  \exp\left(
    \mgfVar'(\lossFunc(\hypothesis,\trainingSample))
  \right)
\leq
\exp(
  \subgaussnorm^2 \mgfVar'^2/2
)$,
namely that $\lossFunc(\hypothesis,\trainingSample)-\risk(\hypothesis)$ is $\subgaussnorm^2$-subgaussian under the product distribution $\sampleHypothesisDist_\trainingSample \bayesPrior_\hypothesis$.
\end{corollary}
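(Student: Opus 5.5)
We plan to apply Theorem~\ref{thm:PAC_Bayes_subset_complete_expectation} with the difference comparator $\distanceFunc(\generalDistanceFuncArgumentOne,\generalDistanceFuncArgumentTwo) := \generalDistanceFuncArgumentTwo - \generalDistanceFuncArgumentOne$. This comparator is linear, hence jointly convex. By linearity of expectation, the left-hand side of \eqref{eq:main_thm_bound} then equals $\generalizationGap$ exactly. The first task is therefore to verify the moment-generating-function condition \eqref{eq:main_thm_mgf} with $\intervalBoundUpper = \infty$.

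For a fixed $\hypothesisInst$, the block $\trainingSet'$ consists of $\blockSize$ i.i.d.\ samples, so the exponent factorizes:
\[
\Expectation_{\sampleHypothesisDist_{\trainingSet'}}\exp\Big(\mgfVar'\big(\risk(\hypothesisInst)-\tfrac{1}{\blockSize}\textstyle\sum_{\sampleIndex}\lossFunc(\hypothesisInst,\trainingSample'_\sampleIndex)\big)\Big)
=\prod_{\sampleIndex=1}^{\blockSize}\Expectation\exp\Big(-\tfrac{\mgfVar'}{\blockSize}\big(\lossFunc(\hypothesisInst,\trainingSample'_\sampleIndex)-\risk(\hypothesisInst)\big)\Big).
\]
Subgaussianity holds for all real $\mgfVar'$, including negative ones, so each factor is at most $\exp(\subgaussnorm^2\mgfVar'^2/(2\blockSize^2))$. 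The product is then at most $\exp(\subgaussnorm^2\mgfVar'^2/(2\blockSize))$. Averaging over $\bayesPrior_\hypothesis$ preserves this bound, so we may take $\mgfFunc_\blockSize(\mgfVar') = \exp(\subgaussnorm^2\mgfVar'^2/(2\blockSize))$.

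Plugging into \eqref{eq:main_thm_bound} with $\numBlocks = \numTrainingSamples/\blockSize$ gives
\[
\numBlocks\log\mgfFunc_\blockSize(\mgfVar/\numBlocks) = \frac{\numTrainingSamples}{\blockSize}\cdot\frac{\subgaussnorm^2\mgfVar^2\blockSize^2}{2\blockSize\,\numTrainingSamples^2} = \frac{\subgaussnorm^2\mgfVar^2}{2\numTrainingSamples}.
\]
Writing $D := \sum_{\blockIndex}\Expectation_{\sampleHypothesisDist_{\sampleBlock{\blockIndex}}}\kldiv{\sampleHypothesisDist_{\hypothesis|\sampleBlock{\blockIndex}}}{\bayesPrior_\hypothesis}$, we obtain
\[
\generalizationGap \le \frac{\subgaussnorm^2\mgfVar}{2\numTrainingSamples} + \frac{D}{\mgfVar}
\]
for every $\mgfVar > 0$. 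Minimizing over $\mgfVar$ with $\mgfVar = \sqrt{2\numTrainingSamples D/\subgaussnorm^2}$ yields $\sqrt{2\subgaussnorm^2 D/\numTrainingSamples}$, which is \eqref{eq:cor-difference-bound}. If $D = 0$, we let $\mgfVar \to \infty$ instead; if $D = \infty$, the bound is trivial.

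For the weaker condition, subgaussianity is assumed only under $\sampleHypothesisDist_\trainingSample\bayesPrior_\hypothesis$, read as centered at $\risk(\hypothesis)$. The factorization step is the main obstacle, because the $\blockSize$ samples share the same $\hypothesis$ and the expectation over the prior no longer splits into a product. Only the case $\blockSize = 1$ then goes through directly.

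The first thing I would try is to use that, after step (c) in the proof of Theorem~\ref{thm:PAC_Bayes_subset_complete_expectation}, we are free to bound the block term differently. Each block's contribution is already an average over $\blockSize$ samples. Applying Jensen once more inside the comparator splits the block average into single-sample terms, but the Donsker--Varadhan change of measure must stay at block level, so this fails to preserve the $\blockSize$-dependence. A Hölder-type bound on the product over $\sampleIndex$ also fails, since it only yields a single-sample factor $\exp(\subgaussnorm^2\mgfVar'^2/2)$ rather than the needed $\exp(\subgaussnorm^2\mgfVar'^2/(2\blockSize))$.

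I therefore expect this step to need a careful reading of what ``the same result'' means. Most likely the intended claim is the $\blockSize = 1$ case, or the condition is meant conditionally on $\hypothesis$ almost surely under $\bayesPrior_\hypothesis$. I would state the remark under that interpretation.
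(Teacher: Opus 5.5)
Your proof of the main claim is the paper's. You factorize the block moment generating function over the $m$ i.i.d.\ samples to get $\Phi_m(\lambda')=\exp(\sigma^2\lambda'^2/(2m))$, substitute into Theorem~\ref{thm:PAC_Bayes_subset_complete_expectation} to get $\mathrm{gen}\le \sigma^2\lambda/(2n)+D/\lambda$, and optimize over $\lambda$. Your handling of $D=0$ by letting $\lambda\to\infty$ closes a small hole, since the paper's choice $\lambda=\sqrt{2nD/\sigma^2}$ is not admissible in that case.

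On the weaker condition, the obstruction you found is real, and the paper does not get around it. Its only justification is that $\mathbb{E}_{P_Z Q}(\ell(W,Z)-L(W))=0$. That handles the centering but says nothing about the dependence created by the shared $W$. Write $M_w(t)=\mathbb{E}_{P_Z}e^{t(\ell(w,Z)-L(w))}$. The block MGF then equals $\mathbb{E}_Q[M_W(\lambda/m)^m]\ge(\mathbb{E}_Q M_W(\lambda/m))^m$, so Jensen points the wrong way.

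In fact the claimed bound can fail for $m>1$. Take $m=n$ and the following setup:
\begin{itemize}
\item $Z$ is uniform on $\{-1,1\}^K$, with $\ell(w_0,z)=1$ and $\ell(w_k,z)=1+z_k$ for $k\in[K]$.
\item The prior is $Q(w_0)=1-p$ and $Q(w_k)=p/K$, with $p=1/K$.
\item The algorithm deterministically outputs the $w_k$, $k\ge 1$, of smallest empirical loss.
\end{itemize}
Under $P_ZQ$, $\ell(W,Z)-L(W)$ is $0$ with probability $1-p$ and $\pm1$ with probability $p/2$ each. Splitting at $u=\log(1/p)-\log\log(1/p)$, one checks that $1-p+p\cosh u\le\exp(u^2/(2\log(1/p)))$ whenever $\log(1/p)\ge 4$. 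So the weak condition holds with $\sigma^2=1/\log K$.

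Since $D(P_{W|S}\|Q)=\log(K^2)$, the claimed bound equals $2/\sqrt n$. On the other hand, $\sqrt n\,\mathrm{gen}=\mathbb{E}\max_{k\le K}n^{-1/2}\sum_{i=1}^n\epsilon_{i,k}\to\mathbb{E}\max_{k\le K}G_k$, with $\epsilon_{i,k}$ i.i.d.\ Rademacher and $G_k$ i.i.d.\ standard Gaussian. For $K=100$ this limit is approximately $2.51$, so the bound is violated for large $n$. Under the stronger hypothesis the same example needs $\sigma^2=1$, and the bound becomes $2\sqrt{\log K/n}$, consistent with the first part.

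So your proposed repair is the right one. Either restrict the second claim to $m=1$, where $\Phi_1$ is exactly the product-measure MGF and the paper's reasoning is valid. Or require subgaussianity for $Q$-almost every $w$, where your factorization goes through unchanged.
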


\section{Example: Gaussian mean estimation with truncated loss function}
\label{sec:gaussian-truncated-loss}
\begin{figure}
\centering
\begin{tikzpicture}
\begin{axis}[
    xmin=0,
    xmax=250,
    xlabel={$\numTrainingSamples$},
    ylabel={bound for $\generalizationGap$},
    legend style={at={(1.05,.5)}, anchor=west},
    grid=major,
    cycle list name=color list,
]
\addplot[color=black] table [x=num_training_samples, y=mc, col sep=comma] {plots.csv};
\addlegendentry{true $\generalizationGap$}

\addplot[color=blue] table [x=num_training_samples, y=cat1, col sep=comma] {plots.csv};
\addlegendentry{\eqref{eq:example-catoni-bound}, $\blockSize = 1$}

\addplot[color=blue, dashed] table [x=num_training_samples, y=kl1, col sep=comma] {plots.csv};
\addlegendentry{\eqref{eq:cor-kl-diff}, $\blockSize = 1$}

\addplot[color=blue, dotted] table [x=num_training_samples, y=diff1, col sep=comma] {plots.csv};
\addlegendentry{\eqref{eq:cor-difference-bound}, $\blockSize = 1$}

\addplot[color=green, dashed] table [x=num_training_samples, y=klnhalf, col sep=comma] {plots.csv};
\addlegendentry{\eqref{eq:cor-kl-diff}, $\blockSize = \numTrainingSamples/2$}

\addplot[color=green] table [x=num_training_samples, y=catnhalf, col sep=comma] {plots.csv};
\addlegendentry{\eqref{eq:example-catoni-bound}, $\blockSize = \numTrainingSamples/2$}

\addplot[color=green, dotted] table [x=num_training_samples, y=diffnhalf, col sep=comma] {plots.csv};
\addlegendentry{\eqref{eq:cor-difference-bound}, $\blockSize = \numTrainingSamples/2$}

\addplot[color=brown, dashed] table [x=num_training_samples, y=klsqrt, col sep=comma] {plots.csv};
\addlegendentry{\eqref{eq:cor-kl-diff}, $\blockSize = \sqrt{\numTrainingSamples}$}

\addplot[color=brown] table [x=num_training_samples, y=catsqrt, col sep=comma] {plots.csv};
\addlegendentry{\eqref{eq:example-catoni-bound}, $\blockSize = \sqrt{\numTrainingSamples}$}

\addplot[color=brown, dotted] table [x=num_training_samples, y=diffsqrt, col sep=comma] {plots.csv};
\addlegendentry{\eqref{eq:cor-difference-bound}, $\blockSize = \sqrt{\numTrainingSamples}$}
\end{axis}
\end{tikzpicture}
\caption{Comparison of true generalization error and theoretical bounds for the example in Section~\ref{sec:gaussian-truncated-loss} with $\mean=1/2$. The solid blue curve shows the optimal bound \eqref{eq:example-catoni-bound} with optimal choice for $\blockSize$.}
\label{figure:plots}
\end{figure}
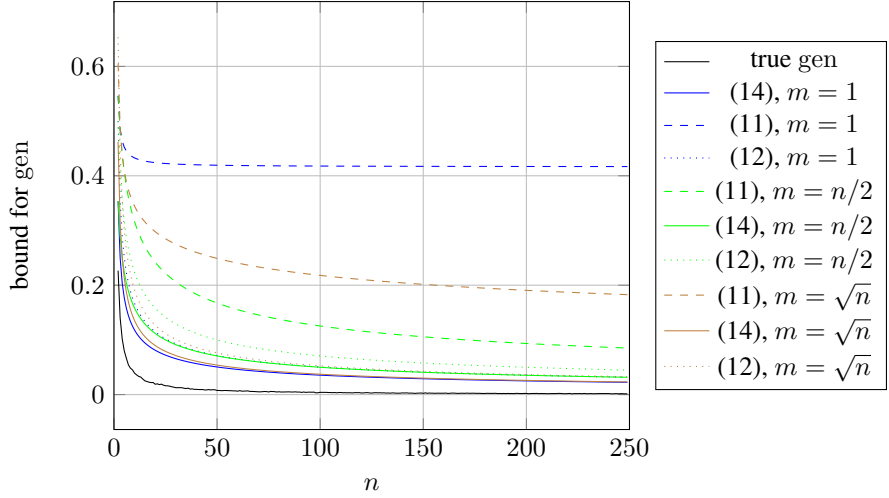

In this section, we explore the simple example toy example for a learning scenario which was introduced in Section~\ref{sec:intro}. The example shows that the proposed block-sample MAC-Bayes bounds can yield meaningful convergence guarantees in cases in which the original PAC-Bayes bound would be vacuous. In the following, we will use the Landau symbols $\landauO, \landauTheta$, and $\landauo$ according to their standard definitions.

Recall that $\trainingSample_\sampleIndex \followsDist \normalDist{\mean}{1}$ for some (unknown) $\mean \in (0,1)$, $\hypothesis:=\frac{1}{\numTrainingSamples}\sum_{\sampleIndex=1}^\numTrainingSamples \trainingSample_\sampleIndex$, and
\[
\lossFunc(\hypothesisInst,\trainingSampleInst)
:=
\truncatorFunction\left(
  (\hypothesisInst-\trainingSampleInst)^2
\right),
~~
\truncatorFunction(\generalReal)
:=
\begin{cases}
\generalReal, &\generalReal \in [0,1) \\
1, &\generalReal \in [1,\infty).
\end{cases}
\]
This loss function has, e.g., been analyzed and applied to practical ML algorithms by \cite{le2021robust}.

To understand the choice of $\blockSize$ for the KL-divergence term, we choose the set $\sampleBlock{\blockIndex}$ to be the $\blockIndex$-th batch with $\blockSize$ samples, namely $\sampleBlock{\blockIndex}=\trainingSample_{(\blockIndex-1)\blockSize+1:\blockIndex\blockSize}$ and define $\sampleBlockSum{\blockIndex}$ to be the sum of these elements $\sampleBlockSum{\blockIndex} := \sum_{\sampleIndex=(\blockIndex-1)\blockSize+1}^{\blockIndex\blockSize} \trainingSample_\sampleIndex$. It can be seen that we have $\sampleHypothesisDist_{\hypothesis|\sampleBlock{\blockIndex}} = \normalDist{\mean(\numTrainingSamples-\blockSize)/\numTrainingSamples + \sampleBlockSum{\blockIndex}/\numTrainingSamples}{(\numTrainingSamples-\blockSize)/\numTrainingSamples^2}$. Choosing the prior $\bayesPrior_\hypothesis := \normalDist{\mean}{(\numTrainingSamples-\blockSize)/\numTrainingSamples^2}$  gives us
\begin{equation}
\label{eq:D_gaussian_truncated}
\Expectation_{\sampleHypothesisDist_\trainingSet}
\kldiv{\sampleHypothesisDist_{\hypothesis|\sampleBlock{\blockIndex}}}{\bayesPrior_\hypothesis}
\overset{(a)}{=}
\Expectation_{\sampleHypothesisDist_\trainingSet}\left(
  \frac{\numTrainingSamples^2}{2(\numTrainingSamples-\blockSize)}\left(
    \frac{\sampleBlockSum{\blockIndex}}{\numTrainingSamples}
    -
    \frac{\mean\blockSize}{\numTrainingSamples}
  \right)^2
\right)
=
\frac{\Expectation_{\sampleHypothesisDist_\trainingSet}\left(\left(\sampleBlockSum{\blockIndex} - \mean\blockSize\right)^2\right)}{2(\numTrainingSamples-\blockSize)}
\overset{(b)}{=}
\frac{\blockSize}{2(\numTrainingSamples-\blockSize)},
\end{equation}
where in (a) we have used the KL divergence formula from \cite[Table 2.3]{gil2011renyi} and (b) follows because $\sampleBlockSum{\blockIndex}$ follows the distribution $\normalDist{\mean\blockSize}{\blockSize}$. Substituting \eqref{eq:D_gaussian_truncated} in \eqref{eq:cor-catoni-diff}, we obtain
\begin{equation}
\label{eq:example-catoni-bound}
\generalizationGap
\leq
\sqrt{
  \frac{
    1
  }{
    4\numTrainingSamples
  }
  \cdot
  \frac{\numTrainingSamples}{\blockSize}
  \cdot
  \frac{\blockSize}{2(\numTrainingSamples-\blockSize)}
}
=
\frac{1}{2}
  \sqrt{
    \frac{1}
        {2(\numTrainingSamples-\blockSize)}
}.
\end{equation}
The original PAC-Bayes bound, which would correspond to the choice $\blockSize = \numTrainingSamples$, can clearly be seen to be vacuous in this case. However, any other choice of block size yields a bound which decays as $\landauO(\numTrainingSamples^{-\frac{1}{2}})$, with $\blockSize = 1$ being the optimal choice. We show the bound \eqref{eq:example-catoni-bound} in Figure~\ref{figure:plots} for various choices of block size $\blockSize$ and plot for illustration also the suboptimal bounds for $\generalizationGap$ that can be derived from \eqref{eq:cor-kl-diff} and \eqref{eq:cor-difference-bound} in a similar way. It can be seen in the plots that the bound \eqref{eq:example-catoni-bound} is not overly sensitive to suboptimal choices of $\blockSize$ (as long as $\blockSize \neq \numTrainingSamples$) and that if adjusting $\blockSize$ appropriately, the suboptimal bounds derived from \eqref{eq:cor-kl-diff} and \eqref{eq:cor-difference-bound} will also decay reasonably as $\numTrainingSamples$ grows.

\begin{remark}
\label{remark:deterministic-algorithm}
While the choice of $\bayesPrior_\hypothesis$ we have used for the evaluation of this example is somewhat arbitrary, it is worth noting that the bound for the case $\blockSize = \numTrainingSamples$ (corresponding to the original PAC-Bayes bound) is vacuous for every possible choice of $\bayesPrior_\hypothesis$. For a detailed argument, see Appendix~\ref{appendix:deterministic-algorithm}.
\end{remark}

\section{Optimizing block-sample MAC-Bayes bounds}
\label{section:optimizing}

The example from the previous section illustrates that the right choice of $\blockSize$ depends on the behavior of the divergence term $\Expectation_{\sampleHypothesisDist_\trainingSet}\sum_{\blockIndex=1}^\numBlocks \kldiv{\sampleHypothesisDist_{\hypothesis|
\sampleBlock{\blockIndex}}}{\bayesPrior_\hypothesis}$, which needs to be evaluated on a case-by-case basis. In this section, we illustrate this finding order-wise in more generality under the assumption that the summands of the divergence term satisfy
\begin{equation}
\Expectation_{\sampleHypothesisDist_\trainingSet}
\kldiv{\sampleHypothesisDist_{\hypothesis|\sampleBlock{\blockIndex}}}{\bayesPrior_\hypothesis}
\leq
\frac{\landauO(\blockSize^{\divGrowthExponent})}{\landauTheta(\numTrainingSamples)}
\label{eq:D_assumption}
\end{equation}
for some $\divGrowthExponent \geq 0$ and all $\sampleBlock{\blockIndex}$.
Of course, the assumption in (\ref{eq:D_assumption}) needs to be verified for the problem at hand. For instance it can be seen in \eqref{eq:D_gaussian_truncated} that for the example in Section \ref{sec:gaussian-truncated-loss}, \eqref{eq:D_assumption} holds with $\divGrowthExponent=1$ as long as $\blockSize \neq \numTrainingSamples$.

\begin{table}
\centering
\begin{tabular}{ llll }
\toprule
$\Expectation_{\sampleHypothesisDist_\trainingSet} \empiricalRisk(\hypothesis,\trainingSet)$ &
$\generalizationGap$ with $\blockSize=\landauTheta\left(\numTrainingSamples^\blockSizeGrowthExponent\right)$ &
$\blockSizeGrowthExponent^*$ &
$\generalizationGap$ with $\blockSize=\landauTheta\left(\numTrainingSamples^{\blockSizeGrowthExponent^*}\right)$ \\
\midrule \vspace{.2cm}
any
&
$\landauO\left(
  \numTrainingSamples^{\frac{\blockSizeGrowthExponent(\divGrowthExponent-1)-1}{2}}
\right)$
&
$\indicator{\divGrowthExponent < 1}$
&
$\landauO\left(
  \numTrainingSamples^{\frac{\min(\divGrowthExponent-1,0)-1}{2}}
\right)$
\\ \vspace{.2cm}
$\landauO\left(\numTrainingSamples^{-\empiricalLossExponent}\right)$
&
$\landauO\left(
  \numTrainingSamples^{\blockSizeGrowthExponent(\divGrowthExponent-1)-1}
\right)
+
\landauO\left(\numTrainingSamples^{-\empiricalLossExponent} \log \numTrainingSamples\right)$
&
$\indicator{\divGrowthExponent < 1}$
&
$\landauO\left(
  \numTrainingSamples^{\min(\divGrowthExponent-1,0)-1}
\right)
+
\landauO\left(\numTrainingSamples^{-\empiricalLossExponent} \log \numTrainingSamples\right)$
\\
 \bottomrule
\end{tabular}
\vspace{8pt}
\caption{Convergence rate of $\generalizationGap$ depending on the growth behavior of $\blockSize$ under assumption \eqref{eq:D_assumption}.}
\label{table:comparison_general}
\end{table}

In Table~\ref{table:comparison_general}, we show bounds for the generalization gap $\generalizationGap = \Expectation_{\sampleHypothesisDist_{\trainingSet, \hypothesis}}(\risk(\hypothesis) - \empiricalRisk(\hypothesis,\trainingSample)$ that can be obtained from Corollary~\ref{cor:catoni} under assumption~\eqref{eq:D_assumption}. The notation $\indicator{\generalCondition}$ which appears in the table is the indicator function, defined to take the value $1$ when the condition $\generalCondition$ is satisfied and the value $0$ otherwise. If nothing is known about the behavior of the empirical loss $\Expectation_{\sampleHypothesisDist_\trainingSet} \empiricalRisk(\hypothesis,\trainingSet)$, only the bound given in the first row applies. If, on the other hand, it is known that the empirical loss vanishes as $\Expectation_{\sampleHypothesisDist_\trainingSet} \empiricalRisk(\hypothesis,\trainingSet) = \landauO\left(\numTrainingSamples^{-\empiricalLossExponent}\right)$ where $\empiricalLossExponent > 0$, there is an additional trick which enables us to use the bound in the second row. This bound can, depending on the value of $\empiricalLossExponent$, be significantly tighter than the bound that does not use this assumption. For details on how these bounds are calculated, we refer the reader to Appendix~\ref{appendix:optimizing}.

The bounds are given for choices of block size assumed to behave as $\blockSize = \landauTheta(\numTrainingSamples^\blockSizeGrowthExponent)$, where $\blockSizeGrowthExponent \in [0,1]$ is a parameter that can be optimized. Correspondingly, we indicate the optimal value $\blockSizeGrowthExponent^*$ of $\blockSizeGrowthExponent$ along with the resulting order-wise behavior of the generalization bound. It can be seen in the table that the example from Section~\ref{sec:gaussian-truncated-loss} sits exactly at the transition point of a dichotomy: For $\divGrowthExponent < 1$, any constant choice for $\blockSize$ is optimal while for $\divGrowthExponent > 1$, the block size $\blockSize$ should grow linearly with $\numTrainingSamples$. This would include the choice $\blockSize = \numTrainingSamples$ as well as choices such as $\blockSize = \numTrainingSamples/2$. It is worth noting here that as can be seen in the example of Section~\ref{sec:gaussian-truncated-loss}, \eqref{eq:D_assumption} is in some cases only satisfied if $\blockSize \neq \numTrainingSamples$. In these cases, the original PAC-Bayes bound ($\blockSize = \numTrainingSamples$) is not applicable (for instance, in our example it would be vacuous) but the block-sample approach still allows us to achieve the order-wise optimal tradeoff point with a choice such as $\blockSize = \numTrainingSamples/2$.

\section{On the (im)possibility of block-sample PAC-Bayes bounds}
\label{section:high-probability}
In this section, we explore the possibility of transforming the block-sample MAC-Bayes bound into a version that holds with high probability, and thus obtaining a PAC-Bayes bound. Specifically, we are interested in the following question: \emph{Is it possible to find a meaningful PAC-Bayes bound for every learning scenario for which Theorem~\ref{thm:PAC_Bayes_subset_complete_expectation} yields a MAC-Bayes bound that converges to $0$ (reasonably fast)?} For the interpretation of \emph{meaningful PAC-Bayes bound}, we will focus on the question whether bounds of the following form exist:
\begin{equation}
\label{eq:target-pac-bayes-bound}
\sampleHypothesisDist_\trainingSet\left(
  \Expectation_{\sampleHypothesisDist_{\hypothesis|\trainingSet}}
    \distanceFunc\left(
      \empiricalRisk(\hypothesis,\trainingSet)
      ,
      \risk(\hypothesis)
    \right)
  \leq
  \generalBoundConstTerm_\numTrainingSamples
  +
  \generalBoundMultTerm_\numTrainingSamples
  \cdot
  \errorprobFunction\left(
    \frac{1}{\errorprob}
  \right)
\right)
\geq
1-\errorprob,
\end{equation}
where $\generalBoundConstTerm_\numTrainingSamples$ converges to $0$ as $\numTrainingSamples \rightarrow \infty$, $\errorprobFunction$ is a function that grows slowly, and $\generalBoundMultTerm_\numTrainingSamples$ converges to $0$ reasonably fast as $\numTrainingSamples \rightarrow \infty$. Note that the right hand side of the original PAC-Bayes bound \eqref{eq:original-pac-bayes} is of the same form as the right hand side in \eqref{eq:target-pac-bayes-bound} with $\errorprobFunction = \log$ and $\generalBoundMultTerm_\numTrainingSamples = 1/\numTrainingSamples$. We show next that there exists a learning scenario with the following properties: (a) Theorem~\ref{thm:PAC_Bayes_subset_complete_expectation} gives a MAC-Bayes bound with a square root convergence, but (b) any PAC-Bayes bound of the form \eqref{eq:target-pac-bayes-bound} which is valid for this learning scenario needs to either have a fast growing $\errorprobFunction$ or a slowly converging $\generalBoundMultTerm_\numTrainingSamples$ (see the theorem statement for the technical meaning of fast growing and slowly converging). For this, we focus on the comparator function $\distanceFunc(\generalDistanceFuncArgumentOne,\generalDistanceFuncArgumentTwo) = \generalDistanceFuncArgumentTwo - \generalDistanceFuncArgumentOne$ in which case
\[
\Expectation_{\sampleHypothesisDist_\trainingSet}
\distanceFunc\left(
  \Expectation_{\sampleHypothesisDist_{\hypothesis|\trainingSet}}\empiricalRisk(\hypothesis,\trainingSet),
  \Expectation_{\sampleHypothesisDist_{\hypothesis|\trainingSet}}\risk(\hypothesis)
\right)
=
\generalizationGap
=
\Expectation_{\sampleHypothesisDist_{\trainingSet,\hypothesis}}\left(
  \risk(\hypothesis)
  -
  \empiricalRisk(\hypothesis,\trainingSet)
\right)
\]
corresponds to the most commonly used definition of the generalization gap.

\begin{theorem}
\label{thm:counterexample}
Let $\blockSize$ be chosen in dependence of $\numTrainingSamples$ such that $\blockSize/\numTrainingSamples \rightarrow 0$ as $\numTrainingSamples \rightarrow \infty$. Then there exist a loss function $\lossFunc$ with values bounded in $[0,1]$, a sample distribution $\sampleHypothesisDist_\trainingSample$, and a learning algorithm $\sampleHypothesisDist_{\hypothesis|\trainingSet}$ with the following properties:
\begin{enumerate}
  \item\label{item:counterexample-assumptions} The assumptions of Theorem~\ref{thm:PAC_Bayes_subset_complete_expectation} are satisfied with the choice $\distanceFunc(\generalDistanceFuncArgumentOne,\generalDistanceFuncArgumentTwo) = \generalDistanceFuncArgumentTwo-\generalDistanceFuncArgumentOne$.
  \item\label{item:counterexample-vanish} There exists $\bayesPrior_\hypothesis$ such that the right hand side of \eqref{eq:main_thm_bound} vanishes in order $\landauO(\numTrainingSamples^{-1/2})$ as $\numTrainingSamples \rightarrow \infty$.
  \item\label{item:counterexample-highprob} For every nondecreasing function $\errorprobFunction: [0,\infty) \rightarrow \reals$, for every sequence $\generalBoundConstTerm_\numTrainingSamples$ that vanishes as $\numTrainingSamples \rightarrow \infty$, for every sequence $\generalBoundMultTerm_\numTrainingSamples$ in $\landauo(1/\errorprobFunction(\numTrainingSamples\log\numTrainingSamples))$, and for every sufficiently large $\numTrainingSamples$, there is $\errorprob \in [0,1]$ such that
  \begin{equation}
  \label{eq:counterexample-bound}
  \sampleHypothesisDist_\trainingSet\left(
    \Expectation_{\sampleHypothesisDist_{\hypothesis|\trainingSet}}\left(
      \risk(\hypothesis)
      -
      \empiricalRisk(\hypothesis,\trainingSet)
    \right)
    >
    \generalBoundConstTerm_\numTrainingSamples
    +
    \generalBoundMultTerm_\numTrainingSamples
    \cdot
    \errorprobFunction\left(
      \frac{1}{\errorprob}
    \right)
  \right)
  >
  \errorprob.
  \end{equation}
\end{enumerate}
\end{theorem}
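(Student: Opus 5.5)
The plan is to build an algorithm that behaves well except on a rare \emph{trigger event} $\triggerEvent$ of probability $\counterexampleProb \approx 1/(\numTrainingSamples\log\numTrainingSamples)$. On this event it outputs a hypothesis that memorizes the training set, so its generalization gap is close to $1$. The trigger must be statistically independent of every single block, so that the block-conditional distributions $\sampleHypothesisDist_{\hypothesis|\sampleBlock{\blockIndex}}$ only see the rare event through a mixture weight $\counterexampleProb$. The MAC-Bayes divergence terms are then multiplied by $\counterexampleProb$ and become small. A PAC-Bayes bound, in contrast, must still cover an event of probability $\counterexampleProb$ on which the gap is about $1$.

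\textbf{Construction.} Let $\counterexamplePar := \numTrainingSamples^2$ and let $\trainingSample = (X,U)$ with $X$ uniform on $[\counterexamplePar]$ and $U$ uniform on $[0,1)$, independent. Hypotheses are a distinguished $\lowRiskHypothesis$ with $\lossFunc(\lowRiskHypothesis,\cdot)\equiv 0$, together with subsets $A\subseteq[\counterexamplePar]$ with $\lossFunc(A,(x,u)) := \indicator{x\notin A}$, so all losses lie in $[0,1]$. Define
\[
\triggerEvent := \Big\{ \big(\textstyle\sum_{\sampleIndex=1}^{\numTrainingSamples} U_\sampleIndex\big) \bmod 1 < \counterexampleProb \Big\}, \qquad \counterexampleProb := \frac{2}{\numTrainingSamples\log\numTrainingSamples}.
\]
The algorithm outputs $\overfittedHypothesis{\trainingSet} := \{X_1,\dots,X_\numTrainingSamples\}$ on $\triggerEvent$ and $\lowRiskHypothesis$ otherwise. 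The key fact is that $\triggerEvent$ is independent of $(X_{1:\numTrainingSamples}, \sampleBlock{\blockIndex})$ for every block $\blockIndex$, because the $U$'s of the other blocks make the sum mod $1$ uniform whenever $\blockSize<\numTrainingSamples$. This is where $\blockSize/\numTrainingSamples\to 0$ enters; we only need $\numBlocks\ge 2$ eventually.

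\textbf{MAC-Bayes part (items \ref{item:counterexample-assumptions} and \ref{item:counterexample-vanish}).} Item \ref{item:counterexample-assumptions} holds because the loss is bounded, so Hoeffding's lemma gives \eqref{eq:main_thm_mgf}, as in Corollary~\ref{cor:catoni}. For item \ref{item:counterexample-vanish}, by the independence above,
\[
\sampleHypothesisDist_{\hypothesis|\sampleBlock{\blockIndex}} = (1-\counterexampleProb)\,\delta_{\lowRiskHypothesis} + \counterexampleProb\, \sampleHypothesisDist_{\overfittedHypothesis{\trainingSet}|\sampleBlock{\blockIndex}}.
\]
Take the prior
\[
\bayesPrior_\hypothesis := (1-\counterexampleProb)\,\delta_{\lowRiskHypothesis} + \counterexampleProb\, \sampleHypothesisDist_{\overfittedHypothesis{\trainingSet}}.
\]
Joint convexity of KL then gives
\[
\Expectation\,\kldiv{\sampleHypothesisDist_{\hypothesis|\sampleBlock{\blockIndex}}}{\bayesPrior_\hypothesis} \le \counterexampleProb\, I(\overfittedHypothesis{\trainingSet};\sampleBlock{\blockIndex}) \le \counterexampleProb\,\blockSize\log\counterexamplePar,
\]
since $\overfittedHypothesis{\trainingSet}$ depends on $\sampleBlock{\blockIndex}$ only through its $X$-values. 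Summing over the $\numBlocks$ blocks and plugging into \eqref{eq:cor-catoni-diff} (equivalently, optimizing $\mgfVar$ in \eqref{eq:main_thm_bound}) gives
\[
\sqrt{\tfrac{1}{4}\counterexampleProb\log\counterexamplePar} = \sqrt{\frac{1}{\numTrainingSamples}} = \landauO(\numTrainingSamples^{-1/2}).
\]

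\textbf{Impossibility part (item \ref{item:counterexample-highprob}).} On $\triggerEvent$ the empirical loss is $0$ and the risk is $1-|\overfittedHypothesis{\trainingSet}|/\counterexamplePar \ge 1-1/\numTrainingSamples$, so the gap is at least $1-1/\numTrainingSamples$ with probability $\counterexampleProb$. Choose $\errorprob := 1/(\numTrainingSamples\log\numTrainingSamples) < \counterexampleProb$. Then $\errorprobFunction(1/\errorprob) = \errorprobFunction(\numTrainingSamples\log\numTrainingSamples)$, and the assumption $\generalBoundMultTerm_\numTrainingSamples \in \landauo(1/\errorprobFunction(\numTrainingSamples\log\numTrainingSamples))$ forces
\[
\generalBoundConstTerm_\numTrainingSamples + \generalBoundMultTerm_\numTrainingSamples\,\errorprobFunction(1/\errorprob) \to 0 .
\]
For large $\numTrainingSamples$ this is below $1-1/\numTrainingSamples$, so the probability in \eqref{eq:counterexample-bound} is at least $\counterexampleProb > \errorprob$. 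The factor $2$ in $\counterexampleProb$ is exactly what lets $1/\errorprob$ equal $\numTrainingSamples\log\numTrainingSamples$ while still having $\errorprob<\counterexampleProb$.

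\textbf{Main obstacle.} The delicate step is the divergence estimate: bounding the mixture KL by $\counterexampleProb$ times a mutual information. This relies crucially on $\triggerEvent$ being exactly independent of each block, so that the mixture weights of $\sampleHypothesisDist_{\hypothesis|\sampleBlock{\blockIndex}}$ and $\bayesPrior_\hypothesis$ coincide. I would also double-check that $I(\overfittedHypothesis{\trainingSet};\sampleBlock{\blockIndex}) \le \blockSize\log\counterexamplePar$ is not spoiled by the $U$-components, which do not enter $\overfittedHypothesis{\trainingSet}$ at all.
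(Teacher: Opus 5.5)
Your proof is correct, and it uses a genuinely different construction of the trigger than the paper does.

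\textbf{The paper's route.} The paper stays on the finite alphabet $[K]$ with $K=3n\log n$. Its trigger is a fixed set of configurations of the first block $S_1$ alone, and its overfitted hypothesis memorizes only $Z_{m+1},\dots,Z_n$. This makes block $1$ special. Its conditional law is either $\delta_{w_0}$ or the full overfitted marginal, while blocks $2,\dots,J$ see a $\phi$-mixture. The paper handles this with a separate case analysis and a tuned prior weight $\alpha=e^{-m/n}$ on $w_0$. Moreover, its gap on the trigger is only $(1-m/n)(1-1/(3\log n))$, which is where $m/n\to 0$ is used.

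\textbf{Your route.} You give every sample an auxiliary uniform coordinate $U_i$ and trigger on $\sum_i U_i \bmod 1$. This event is independent of $(X_{1:n},S_j)$ for every $j$ at once as soon as $J\ge 2$. What this buys:
\begin{itemize}
\item All blocks are symmetric, and the prior can be taken to be the marginal $P_W$ itself.
\item The divergence sum becomes $\sum_j I(W;S_j)\le \phi\sum_j I(\hat w_S;S_j)\le n\phi\log K=4$, with no parameter tuning.
\item Because your memorizer covers the whole sample, the empirical loss on the trigger is exactly $0$. So your construction actually needs only $m<n$, rather than $m/n\to 0$.
\item Your margin $\phi=2\delta$ is the cleaner way to obtain the strict inequality in \eqref{eq:counterexample-bound}. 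The paper sets $\delta=\phi$, and then the overfitting event by itself only gives probability $\ge\delta$, not $>\delta$.
\end{itemize}
What the paper's route buys in exchange is a purely discrete sample space, in which the trigger is a function of the data values themselves with no auxiliary randomness.

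\textbf{One small imprecision.} Item~2 concerns \eqref{eq:main_thm_bound} with the difference comparator and Hoeffding's $\Phi_m(\lambda')=\exp(\lambda'^2/(8m))$. That gives $\lambda/(8n)+4/\lambda=O(n^{-1/2})$ at $\lambda\asymp\sqrt n$. This has the same order as your Catoni/Pinsker value $\sqrt{1/n}$, but it is not literally the same quantity, so ``equivalently'' is not quite accurate.
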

The full proof of Theorem~\ref{thm:counterexample} is relegated to Appendix~\ref{appendix:counterexample-proof}. In it, we explicitly construct a learning scenario with the following property: With a small probability over $\sampleHypothesisDist_\trainingSet$, the learning algorithm dramatically overfits to the training sample. With the remaining (larger) probability, it outputs a hypothesis that has zero loss (both empirical and in population). We then show that items \ref{item:counterexample-assumptions}, \ref{item:counterexample-vanish}, and \ref{item:counterexample-highprob} hold for this learning scenario, giving an explicit choice of $\bayesPrior_\hypothesis$ for item \ref{item:counterexample-vanish}.

Items \ref{item:counterexample-assumptions} and \ref{item:counterexample-vanish} in conjunction with Theorem~\ref{thm:PAC_Bayes_subset_complete_expectation} tell us that there exists a learning scenario in which the bound \eqref{eq:main_thm_bound} holds with a right hand side of the order $\landauO(\numTrainingSamples^{-1/2})$. Item \ref{item:counterexample-highprob} tells us that in the same learning scenario, a bound of the form \eqref{eq:target-pac-bayes-bound} cannot hold with a slowly growing $\errorprobFunction$ and a fast converging $\generalBoundMultTerm_\numTrainingSamples$ at the same time (observe that \eqref{eq:counterexample-bound} is the logical opposite of \eqref{eq:target-pac-bayes-bound} with the difference function substituted for $\distanceFunc$).

We conclude this section with two remarks that approach the question of whether block-sample PAC-Bayes bounds are possible from slightly different angles.

\begin{remark}
If $\distanceFunc$ takes only nonnegative values\footnote{Of the three comparator functions studied in this paper, this applies only to the KL divergence. However, other comparator functions that satisfy this assumption are readily available, for instance the absolute difference function $\distanceFunc(\generalDistanceFuncArgumentOne,\generalDistanceFuncArgumentTwo) = \absolute{\generalDistanceFuncArgumentOne-\generalDistanceFuncArgumentTwo}$ or the square distance function $\distanceFunc(\generalDistanceFuncArgumentOne,\generalDistanceFuncArgumentTwo) = (\generalDistanceFuncArgumentOne-\generalDistanceFuncArgumentTwo)^2$.}, then \eqref{eq:main_thm_bound} and the Markov bound imply the following high-probability bound: For every $\errorprob \in (0,1)$, with probability at least $1-\errorprob$ over $\sampleHypothesisDist_\trainingSet$,
\begin{equation*}
\distanceFunc\left(
  \Expectation_{\sampleHypothesisDist_{\hypothesis|\trainingSet}}\empiricalRisk(\hypothesis,\trainingSet),
  \Expectation_{\sampleHypothesisDist_{\hypothesis|\trainingSet}}\risk(\hypothesis)
\right)
<
\frac{
      \frac{\numTrainingSamples}{\blockSize} \log \mgfFunc_{\blockSize}\left(\frac{\mgfVar\blockSize}{\numTrainingSamples}\right)
      +
      \sum_{\blockIndex=1}^{\numBlocks}
        \Expectation_{\sampleHypothesisDist_{\sampleBlock{\blockIndex}}}
          \kldiv{\sampleHypothesisDist_{\hypothesis|\sampleBlock{\blockIndex}}}
                                      {\bayesPrior_\hypothesis}
    }
    {\mgfVar\errorprob}.
\end{equation*}
This bound has the form \eqref{eq:target-pac-bayes-bound} with $\generalBoundConstTerm_\numTrainingSamples=0$ and, depending on the learning scenario at hand, quite possibly with a term $\generalBoundMultTerm_\numTrainingSamples$ that converges to $0$ reasonably fast. However, $\errorprobFunction$ is the identity function here, which is much worse than the original PAC-Bayes bound \eqref{eq:original-pac-bayes} where we have $\errorprobFunction = \log$.
\end{remark}

\begin{remark}
While Theorem~\ref{thm:counterexample} conclusively shows that an exponential high-probability version of the MAC-Bayes bound \eqref{eq:main_thm_bound} is not possible in general, it strictly speaking leaves open the possibility that a block-sample version of the original PAC-Bayes bound \eqref{eq:original-pac-bayes} of the following form might exist:
\begin{equation}
\label{eq:hypothetical-block-pac-bayes}
\Probability\left(
  \distanceFunc(\empiricalRisk, \risk)
  \geq
  \frac{
    \sum_{\blockIndex=1}^\numBlocks
      \kldiv{\sampleHypothesisDist_{\hypothesis|\sampleBlock{\blockIndex}}}{\bayesPrior_\hypothesis}
    +
    \generalBoundQuantity(\numTrainingSamples,\distanceFunc)
    +
    \log\frac{1}{\errorprob}
  }{
    \numTrainingSamples
  }
\right)
\leq
\errorprob
\end{equation}
The main difference compared to \eqref{eq:main_thm_bound} is that the expectation operator in front of the divergence terms $\kldiv{\sampleHypothesisDist_{\hypothesis|\sampleBlock{\blockIndex}}}{\bayesPrior_\hypothesis}$ is absent in \eqref{eq:hypothetical-block-pac-bayes}. However, if one inspects the proof of Theorem~\ref{thm:counterexample}, it is clear that in the error event evaluated in \eqref{eq:counterexample-bound} (for a detailed derivation see Appendix~\ref{appendix:instantaneous-divergence}), we have
\begin{equation}
\label{eq:instantaneous-divergence}
\instantaneousDivergence
:=
\sum_{\blockIndex=1}^{\numBlocks}
  \kldiv{\sampleHypothesisDist_{\hypothesis|\sampleBlock{\blockIndex}}}
                               {\bayesPrior_\hypothesis}
=
\landauO(\log \numTrainingSamples),
\end{equation}
so the term $\instantaneousDivergence/\numTrainingSamples$ that appears in \eqref{eq:hypothetical-block-pac-bayes} vanishes at rate $\landauO(\numTrainingSamples^{-1} \log \numTrainingSamples)$. Therefore it is clear from item~\ref{item:counterexample-highprob} of Theorem~\ref{thm:counterexample} that no bound of the form \eqref{eq:hypothetical-block-pac-bayes} with $\generalBoundQuantity(\numTrainingSamples,\distanceFunc)/\numTrainingSamples \rightarrow 0$ as $\numTrainingSamples \rightarrow \infty$ can hold.
\end{remark}

\section{Conclusions and limitations}
\label{sec:limitation}
We have presented a family of block-sample MAC-Bayes bounds which are generally tighter than MAC-Bayes versions of classical PAC-Bayes bounds, and we have shown that they cannot in general be established as PAC (i.e., high-probability) versions with logarithmic dependence on the error probability. One important further limitation of these bounds is their dependence on the data distribution $\sampleHypothesisDist_\trainingSample$ via the divergence term $\kldiv{\sampleHypothesisDist_{\hypothesis|\sampleBlock{\blockIndex}}}{\bayesPrior_\hypothesis}$. In this sense, they cannot be applied immediately if we do not have any information about the data distribution $\sampleHypothesisDist_\trainingSample$. Two comments are in order regarding this limitation: \textbf{(a)} Given certain basic information about the data distribution, block-sample MAC-Bayes bounds can give useful results without requiring full knowledge of the data distribution $\sampleHypothesisDist_\trainingSample$. In fact, as we show in Section~\ref{section:optimizing}, as long as we have enough knowledge of the data distribution to be able to establish an upper bound of the form \eqref{eq:D_assumption}, the block-sample MAC-Bayes bound can be applied. \textbf{(b)} More broadly, almost all information-theoretic bounds on generalization error depend on the data distribution as the mutual information term $I(W;S)$ (or variations thereof) depends on the data distribution (see, e.g., \cite{hellstrom_generalization_2023} for a recent survey). In such cases, to make the bounds useful, the mutual information term is further upper bounded by using knowledge of the data and learning algorithm. This is done by, e.g., \cite{bu_tightening_2020} and \cite{negrea_information-theoretic_2019}. A similar approach can be taken for the block-sample PAC-Bayes bounds by further bounding the divergence term $\kldiv{\sampleHypothesisDist_{\hypothesis|\sampleBlock{\blockIndex}}}{\bayesPrior_\hypothesis}$ with properties of the learning algorithm. We consider this an interesting future research direction.

\subsubsection*{Acknowledgments}
The work in this manuscript was partially supported by the Swiss National Science Foundation under Grant 200364 and by the Australian Research Council under Project DP23010149..

\bibliography{main.bib}
\bibliographystyle{iclr2026_conference}

\newpage

\appendix

\section{Proof details for Corollaries and Remarks in Section~\ref{sec:main_result}}
\label{appendix:main-section-corollaries}

\subsection{Proof of Corollary~\ref{cor:catoni}}
The proof uses similar techniques as in \cite{foong_how_2021} and \cite{maurer_note_2004}, and is given here for completeness. We will make use of the following lemma.

\begin{lemma}[\cite{maurer_note_2004}, Lemma 3]\label{lemma:maurer}
Let $\generalRV_1,\ldots, \generalRV_\generalNatural$ be i.i.d. random variables taking values in $[0,1]$  and $\generalRV_1',\ldots, \generalRV_\generalNatural'$ i.i.d. Bernoulli random variables such that  $\Expectation{\generalRV_1}=\Expectation{\generalRV_1'}$. For any convex function $\generalFunction:[0,1]^\generalNatural \rightarrow \reals$, it holds that
\begin{align*}
\Expectation \generalFunction(\generalRV_1, \dots, \generalRV_\generalNatural)
\leq
\Expectation \generalFunction(\generalRV_1', \dots, \generalRV_\generalNatural').
\end{align*}
\end{lemma}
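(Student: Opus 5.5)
The plan is to compare $\generalFunction$ on the cube with its multilinear interpolation from the vertices $\{0,1\}^\generalNatural$, and then take expectations using independence. For $x=(x_1,\dots,x_\generalNatural)\in[0,1]^\generalNatural$ and a vertex $e\in\{0,1\}^\generalNatural$, set
\[
\pi_e(x) := \prod_{\generalIndex=1}^{\generalNatural} x_\generalIndex^{e_\generalIndex}(1-x_\generalIndex)^{1-e_\generalIndex}.
\]
These weights are nonnegative and satisfy $\sum_e \pi_e(x)=1$.

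\textbf{Step 1: pointwise interpolation inequality.} I first prove that, for every $x\in[0,1]^\generalNatural$,
\[
\generalFunction(x)\le \sum_{e\in\{0,1\}^\generalNatural}\pi_e(x)\,\generalFunction(e).
\]
The argument is by induction over the coordinates. Write $x_1=(1-x_1)\cdot 0+x_1\cdot 1$. Convexity of $\generalFunction$ along the first coordinate, with the other coordinates held fixed, gives
\[
\generalFunction(x)\le (1-x_1)\,\generalFunction(0,x_{2},\dots,x_\generalNatural)+x_1\,\generalFunction(1,x_{2},\dots,x_\generalNatural).
\]
Each restriction $\generalFunction(e_1,\cdot)$ is again a convex function on $[0,1]^{\generalNatural-1}$. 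Applying the same step to coordinates $2,\dots,\generalNatural$ in turn, and multiplying the nonnegative weights, produces exactly the products $\pi_e(x)$. This step is the only real content of the proof. The one point to check is that every coefficient stays nonnegative, so that each inequality is preserved when it is substituted into the previous one; this holds because every $x_\generalIndex$ lies in $[0,1]$.

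\textbf{Step 2: take expectations.} Substitute $x=(\generalRV_1,\dots,\generalRV_\generalNatural)$ and take expectations on both sides. Since the $\generalRV_\generalIndex$ are independent, $\Expectation\,\pi_e(\generalRV)$ factorizes as $\prod_\generalIndex \Expectation\big[\generalRV_\generalIndex^{e_\generalIndex}(1-\generalRV_\generalIndex)^{1-e_\generalIndex}\big]$. With $p:=\Expectation\generalRV_1$, this equals $\prod_\generalIndex p^{e_\generalIndex}(1-p)^{1-e_\generalIndex}$, because each factor is linear in $\generalRV_\generalIndex$.

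The last product is precisely $\Probability\big((\generalRV'_1,\dots,\generalRV'_\generalNatural)=e\big)$, since the $\generalRV'_\generalIndex$ are i.i.d.\ Bernoulli with parameter $\Expectation\generalRV'_1=p$. Hence the right-hand side becomes
\[
\sum_e \Probability(\generalRV'=e)\,\generalFunction(e)=\Expectation\,\generalFunction(\generalRV'_1,\dots,\generalRV'_\generalNatural),
\]
which gives the claim.

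Measurability and integrability cause no difficulty. A convex function is bounded above on the compact cube by its maximum over the finitely many vertices, and the inequality of Step 1 is used pointwise before integrating.
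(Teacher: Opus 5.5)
Your proof is correct. It follows essentially the same route as the cited source: the paper gives no proof of its own and defers to Maurer's Lemma 3, which likewise bounds $f$ pointwise on the cube by the multilinear interpolation $\sum_{e\in\{0,1\}^k}\pi_e(x)f(e)$ of its vertex values and then uses independence to identify $\Expectation\,\pi_e(X)$ with $\Probability(X'=e)$, except that there the pointwise majorant comes in one step from finite Jensen, since $\sum_e\pi_e(x)\,e=x$, rather than from your coordinate-by-coordinate induction.
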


For any fixed $\hypothesisInst$, denote $\generalRV_\sampleIndex := \lossFunc(\hypothesisInst,\trainingSample_\sampleIndex')$ for all $\sampleIndex=1,\ldots, \blockSize$ and use $\generalRV$ to denote the tuple $(\generalRV_1,\ldots, \generalRV_\blockSize)$. Define $\generalRV'$ to be a tuple of $\blockSize$ i.i.d. Bernoulli random variables such that $\Expectation \generalRV_\sampleIndex = \Expectation \generalRV_\sampleIndex'$ for all $\sampleIndex$. Let $\generalFunction: \generalRV \mapsto \exp(\mgfVar'\catoniFunc{\catoniPar}(\sum_{\sampleIndex=1}^\blockSize \generalRV_\sampleIndex, \risk(\hypothesisInst)))$. Thus, we obtain
\begin{align*}
&\hphantom{{}={}}
\Expectation_{\sampleHypothesisDist_{\trainingSet'}}
  \exp\left(
    \mgfVar'
    \catoniFunc{\catoniPar}\left(
      \frac{1}{\blockSize}
      \sum_{\sampleIndex=1}^{\blockSize}
        \lossFunc(\hypothesisInst,\trainingSample'_\sampleIndex),
      \risk(\hypothesisInst)
    \right)
  \right)
\\
\overset{(a)}&{\leq}
\Expectation_{\sampleHypothesisDist_{\generalRV'}}
  \exp\left(
    \mgfVar'
    \catoniFunc{\catoniPar}\left(
      \frac{1}{\blockSize}
      \sum_{\sampleIndex=1}^{\blockSize}
        \generalRV_\sampleIndex',
      \risk(\hypothesisInst)
    \right)
  \right)
\\
&=
\frac{1}{
  \Big(
    1
    +
    \risk(\hypothesisInst)
    \big(\exp(-\catoniPar) - 1\big)
  \Big)^{\mgfVar'}
}
\Expectation_{\sampleHypothesisDist_{\generalRV'}}
  \exp\left(
    -\frac{
      \catoniPar
      \mgfVar'
    }{
      \blockSize
    }
    \sum_{\sampleIndex=1}^{\blockSize}
      \generalRV_\sampleIndex'
  \right)
\\
\overset{(b)}&{=}
\frac{
  \Big(
    1
    +
    \risk(\hypothesisInst)
    \big(
      \exp(-\catoniPar\mgfVar'/\blockSize)
      -
      1
    \big)
  \Big)^{\blockSize}
}{
  \Big(
    1
    +
    \risk(\hypothesisInst)
    \big(\exp(-\catoniPar) - 1\big)
  \Big)^{\mgfVar'}
},
\end{align*}
where (a) is by Lemma~\ref{lemma:maurer} and in (b) we have used the well-known expression for the moment generating function of the binomial distribution. This upper bound is clearly equal to $1$ if $\mgfVar' = \blockSize$, so \eqref{eq:main_thm_mgf} is satisfied for $\mgfVar' = \blockSize$ with $\mgfFunc_\blockSize(\blockSize)=1$. Consequently, an application of Theorem~\ref{thm:PAC_Bayes_subset_complete_expectation} with $\mgfVar=\numTrainingSamples$ yields \eqref{eq:cor-catoni}.

To obtain \eqref{eq:cor-catoni-kl} from \eqref{eq:cor-catoni}, we observe that by~\cite[Lemma E.1]{foong_how_2021},
\[
\binaryKL{
  \Expectation_{\sampleHypothesisDist_{\trainingSet,\hypothesis}}\empiricalRisk(\hypothesis,\trainingSet)
}{
  \Expectation_{\sampleHypothesisDist_{\trainingSet,\hypothesis}}\risk(\hypothesis)
}
=
\sup_{\catoniPar \in (0,\infty)}
  \catoniFunc{\catoniPar}\left(
      \Expectation_{\sampleHypothesisDist_{\trainingSet,\hypothesis}}\empiricalRisk(\hypothesis,\trainingSet),
      \Expectation_{\sampleHypothesisDist_{\trainingSet,\hypothesis}}\risk(\hypothesis)
  \right).
\]

Finally, we can obtain \eqref{eq:cor-catoni-diff} from \eqref{eq:cor-catoni-kl} using Pinsker's inequality
\[
\generalizationGap
\leq
\frac{1}{2}
\sqrt{
  \binaryKL{
    \Expectation_{\sampleHypothesisDist_{\trainingSet,\hypothesis}}\empiricalRisk(\hypothesis,\trainingSet)
  }{
    \Expectation_{\sampleHypothesisDist_{\trainingSet,\hypothesis}}\risk(\hypothesis)
  }
}.
\]

\subsection{Calculation Details for Remark~\ref{remark:kl}}
We again fix $\hypothesisInst \in \hypothesisSpace$ and use Lemma~\ref{lemma:maurer} with $\generalRV_\sampleIndex := \lossFunc(\hypothesisInst,\trainingSample_\sampleIndex')$, corresponding Bernoulli variables $\generalRV_\sampleIndex'$, and $\generalFunction: \generalRV \mapsto \exp\big(\mgfVar' \binaryKL{\sum_\sampleIndex \generalRV_\sampleIndex/\blockSize}{\risk(\hypothesisInst)}\big)$. It can be checked easily that $f$ is convex. For any $w$, we have
\begin{align*}
&\hphantom{{}={}}
\Expectation_{\sampleHypothesisDist_{\trainingSet'}}\left(
  \exp\left(
    \mgfVar'
    \binaryKL{
      \frac{1}{\blockSize}
      \sum_{\sampleIndex=1}^\blockSize
        \lossFunc(\hypothesisInst,\trainingSample_\sampleIndex')
    }{
      \risk(\hypothesisInst)
    }
  \right)
\right)
\\
\overset{(a)}&{\leq}
\Expectation_{\sampleHypothesisDist_{\trainingSet'}}\left(
  \exp\left(
    \mgfVar'
    \binaryKL{
      \frac{1}{\blockSize}
      \sum_{\sampleIndex=1}^\blockSize
        \generalRV_\sampleIndex'
    }{
      \risk(\hypothesisInst)
    }
  \right)
\right)
\\
&=
\sum_{\generalNatural=0}^\blockSize
  \sampleHypothesisDist_{\generalRV'}\left(
    \sum_{\sampleIndex=1}^\blockSize
      \generalRV_\sampleIndex'
    =
    \generalNatural
  \right)
  \exp\left(
    \mgfVar'
    \binaryKL{
      \frac{\generalNatural}{\blockSize}
    }{
      \risk(\hypothesisInst)
    }
  \right)
\\
&\leq
\sup_{\riskInst \in [0,1]}
  \sum_{\generalNatural=0}^\blockSize
    \binompmf{\generalNatural}{\blockSize}{\riskInst}
    \exp\left(
    \mgfVar'
    \binaryKL{
      \frac{\generalNatural}{\blockSize}
    }{
      \riskInst
    }
  \right)
\end{align*}
where (a) follows from Lemma \ref{lemma:maurer} and $\binompmf{\generalNatural}{\blockSize}{\riskInst}$ denotes the pmf of a binomial random variable with $\blockSize$ trials and success probability $\riskInst$, evaluated at $\generalNatural$. Using the result by \cite[Theorem 1]{maurer_note_2004},  the last term is upper bounded by $2\sqrt{\blockSize}$ when choosing $\lambda'=\blockSize$. \cite[Theorem 1]{maurer_note_2004} states the assumption $\blockSize \geq 8$, however, as noted at the end of the proof of \cite[Lemma 19]{germain2015risk}, this assumption is not necessary as the remaining cases can be verified computationally. This means that \eqref{eq:main_thm_mgf} holds with $\mgfFunc_\blockSize(\blockSize):=2\sqrt{\blockSize}$. So an application of Theorem~\ref{thm:PAC_Bayes_subset_complete_expectation} with $\mgfVar:=\numTrainingSamples$ yields
\[
\Expectation_{\sampleHypothesisDist_\trainingSet}
  \binaryKL{
    \Expectation_{\sampleHypothesisDist_{\hypothesis|\trainingSet}}\empiricalRisk(\hypothesis,\trainingSet)
  }{
    \Expectation_{\sampleHypothesisDist_{\hypothesis|\trainingSet}}\risk(\hypothesis)
  }
\leq
\frac{
       \frac{\numTrainingSamples}{\blockSize} \log (2\sqrt{\blockSize})
       +
       \sum_{\blockIndex=1}^{\numBlocks}
         \Expectation_{\sampleHypothesisDist_{\sampleBlock{\blockIndex}}}
          \kldiv{\sampleHypothesisDist_{\hypothesis|\sampleBlock{\blockIndex}}}
                                       {\bayesPrior_\hypothesis}
     }
     {\numTrainingSamples}
\]
from which \eqref{eq:cor-kl} follows.

\subsection{Proof of Corollary~\ref{cor:difference}}
The assumption on the moment-generating function in the corollary straightforwardly implies
\[
\Expectation_{\sampleHypothesisDist_{\trainingSet'}}
  \exp\left(
    \mgfVar'\left(
      \frac{1}{\blockSize}
      \sum_{\sampleIndex=1}^\blockSize
        \lossFunc(\hypothesisInst,\trainingSample'_\sampleIndex)
      -
      \risk(\hypothesisInst)
    \right)
  \right)
\leq
\exp\left(
  \frac{\subgaussnorm^2 \mgfVar'^2}{2\blockSize}
\right)
\]
for all $\mgfVar' \in \reals$ and any $\hypothesisInst$. Choosing $\distanceFunc(\generalDistanceFuncArgumentTwo
,\generalDistanceFuncArgumentOne) := \generalDistanceFuncArgumentOne-\generalDistanceFuncArgumentTwo$,  this then implies that for any $\mgfVar>0$
\begin{align*}
&\hphantom{{}={}}
\Expectation_{\sampleHypothesisDist_{\trainingSample'} \bayesPrior_\hypothesis}
  \exp\left(
    \mgfVar
    \distanceFunc\left(
      \frac{1}{\blockSize}
      \sum_{\sampleIndex=1}^\blockSize
        \lossFunc(\hypothesis,\trainingSample'_\sampleIndex),
      \risk(\hypothesis)
    \right)
  \right)
\\
&=
\Expectation_{\sampleHypothesisDist_{\trainingSample'} \bayesPrior_\hypothesis}
  \exp\left(
    -\mgfVar
    \left(
      \frac{1}{\blockSize}
      \sum_{\sampleIndex=1}^\blockSize
        \lossFunc(\hypothesis,\trainingSample'_\sampleIndex)
      -
      \risk(\hypothesis)
    \right)
  \right)
\\
&\leq
\exp\left(
  \frac{\subgaussnorm^2\mgfVar^2}{2\blockSize}
\right)
\\
&=:
\mgfFunc_\blockSize(\mgfVar).
\end{align*}
Furthermore, the above inequality also holds when the weaker condition in the corollary is assumed by noticing that $\Expectation_{\sampleHypothesisDist_\trainingSample}(\lossFunc(\hypothesisInst,\trainingSample)-\risk(\hypothesisInst)) = 0$ for every $\hypothesisInst$ so
\[
\Expectation_{\sampleHypothesisDist_\trainingSample \bayesPrior_\hypothesis}(
  \lossFunc(\hypothesis,\trainingSample)
  -
  \risk(\hypothesis)
)
=
\Expectation_{\sampleHypothesisDist_\trainingSample}(
  \lossFunc(\hypothesis,\trainingSample)
  -
  \risk(\hypothesis)
)
=
0.
\]
We invoke Theorem \ref{thm:PAC_Bayes_subset_complete_expectation} and substitute our expression for $\mgfFunc_\blockSize$ to get
\[
\generalizationGap
=
\Expectation_{\sampleHypothesisDist_{\trainingSet,\hypothesis}}\left(
  \risk(\hypothesis)
  -
  \empiricalRisk(\hypothesis,\trainingSet)
\right)
\leq
\frac{\subgaussnorm^2\mgfVar}{2\numTrainingSamples}
+
\frac{1}{\mgfVar}
\sum_{\blockIndex=1}^{\numBlocks}
  \Expectation_{\sampleHypothesisDist_{\sampleBlock{\blockIndex}}}
  \kldiv{\sampleHypothesisDist_{\hypothesis|\sampleBlock{\blockIndex}}}
        {\bayesPrior_\hypothesis}.
\]

The corollary then follows with the choice
\[
\mgfVar
:=
\sqrt{\frac{
  2\numTrainingSamples
  \sum_{\blockIndex=1}^{\numBlocks}
    \Expectation_{\sampleHypothesisDist_{\sampleBlock{\blockIndex}}}
    \kldiv{\sampleHypothesisDist_{\hypothesis|\sampleBlock{\blockIndex}}}
          {\bayesPrior_\hypothesis}
}{
  \subgaussnorm^2
}}.
\]

\section{Detailed argument for Remark~\ref{remark:deterministic-algorithm}}
\label{appendix:deterministic-algorithm}
In the $\blockSize=\numTrainingSamples$ version of the bound, the KL divergence $\kldiv{\sampleHypothesisDist_{\hypothesis|\trainingSet}}{\bayesPrior_\hypothesis}$ appears on the right hand side which is finite iff $\sampleHypothesisDist_{\hypothesis|\trainingSet} \absolutelyContinuous \bayesPrior_\hypothesis$ (where $\absolutelyContinuous$ denotes absolute continuity). Hence,
\begin{equation}
\label{eq:deterministic-algorithm-remark-1}
\sampleHypothesisDist_\trainingSet\big(
  \kldiv{\sampleHypothesisDist_{\hypothesis|\trainingSet}}
        {\bayesPrior_\hypothesis}
  <
  \infty
\big)
=
\sampleHypothesisDist_\trainingSet(
  \sampleHypothesisDist_{\hypothesis|\trainingSet}
  \absolutelyContinuous
  \bayesPrior_\hypothesis
)
=
\sampleHypothesisDist_\trainingSet\big(
  \sampleHypothesisDist_{\hypothesis|\trainingSet}(\{
    \hypothesisInst:~
    \bayesPrior_\hypothesis(\hypothesisInst)
    >
    0
  \})
  =
  1
\big)
\end{equation}
where the last equality holds since $\sampleHypothesisDist_{\hypothesis|\trainingSet}$ is a Dirac distribution and therefore $\sampleHypothesisDist_{\hypothesis|\trainingSet} \absolutelyContinuous \bayesPrior_\hypothesis$ iff $\bayesPrior_\hypothesis(\hypothesisInst) > 0$ for the single mass point $\hypothesisInst$ of $\sampleHypothesisDist_{\hypothesis|\trainingSet}$. Clearly, in this example $\sampleHypothesisDist_\hypothesis := \Expectation_{\sampleHypothesisDist_\trainingSet} \sampleHypothesisDist_{\hypothesis|\trainingSet}$ is a continuous distribution (i.e., it measures all singletons with probability $0$) and $\bayesPrior_\hypothesis$ cannot have more than countably many points of positive mass, so by $\sigma$-additivity, we have
\begin{equation}
\label{eq:deterministic-algorithm-remark-2}
\Expectation_{\sampleHypothesisDist_\trainingSet}\big(
  \sampleHypothesisDist_{\hypothesis|\trainingSet}(\{
    \hypothesisInst:~
    \bayesPrior_\hypothesis(\hypothesisInst)
    >
    0
  \})
\big)
=
\sampleHypothesisDist_\hypothesis(\{
  \hypothesisInst:~
  \bayesPrior_\hypothesis(\hypothesisInst)
  >
  0
\})
=
0
\end{equation}
Since the integrand on the left hand side of \eqref{eq:deterministic-algorithm-remark-2} is nonnegative, the integral being $0$ implies that the integrand has to be $0$ almost surely. Therefore, \eqref{eq:deterministic-algorithm-remark-1} and \eqref{eq:deterministic-algorithm-remark-2} together imply that $\sampleHypothesisDist_\trainingSet(\kldiv{\sampleHypothesisDist_{\hypothesis|\trainingSet}}{\bayesPrior_\hypothesis} < \infty) = 0$, so in case $\blockSize = \numTrainingSamples$, the term $\Expectation_{\sampleHypothesisDist_\trainingSet} \kldiv{\sampleHypothesisDist_{\hypothesis|\trainingSet}}{\bayesPrior_\hypothesis}$ which appears on the right hand side is infinite regardless of the choice of $\sampleHypothesisDist_\hypothesis$, making the bound vacuous.

Indeed, it can  be seen from this argument that this property of the $\blockSize = \numTrainingSamples$ version of the bound is not specific to this example, but it applies whenever the algorithm $\sampleHypothesisDist_{\hypothesis|\trainingSet}$ is a Dirac distribution (i.e., the algorithm maps every training set $\trainingSet$ deterministically to a hypothesis $\hypothesis$) but the marginal distribution $\sampleHypothesisDist_\hypothesis$ is continuous.

\section{Calculations for Table~\ref{table:comparison_general} in Section~\ref{section:optimizing}}
\label{appendix:optimizing}
In this appendix, we provide the full derivations for the rates shown in Table~\ref{table:comparison_general}.

We substitute \eqref{eq:D_assumption} into \eqref{eq:cor-catoni-diff} to obtain
\[
\generalizationGap
\leq
\sqrt{
  \landauO(\numTrainingSamples^{-2} \numBlocks \blockSize^\divGrowthExponent)
}
=
\landauO(\numTrainingSamples^{-\frac{1}{2}} \blockSize^{\frac{\divGrowthExponent-1}{2}}).
\]
The entry in the second column of the first row then follows by substituting $\blockSize=\landauTheta(\numTrainingSamples^\blockSizeGrowthExponent)$. For the second row, we need the following lemma.

\begin{lemma}
\label{lemma:kldiv}
If $\generalDistanceFuncArgumentOne,\generalDistanceFuncArgumentTwo\in[0,1]$, $\binaryKL{\generalDistanceFuncArgumentOne}{\generalDistanceFuncArgumentTwo} \leq \generalReal$, and $\generalDistanceFuncArgumentOne = \landauO(\numTrainingSamples^{-\generalExponent})$, then $\generalDistanceFuncArgumentTwo-\generalDistanceFuncArgumentOne \leq 2\generalReal + \landauO(\numTrainingSamples^{-\generalExponent}\log\numTrainingSamples)$.
\end{lemma}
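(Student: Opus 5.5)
The plan is to lower bound $\binaryKL{\generalDistanceFuncArgumentOne}{\generalDistanceFuncArgumentTwo}$, using the standard convention $\binaryKL{\generalDistanceFuncArgumentOne}{\generalDistanceFuncArgumentTwo}=\generalDistanceFuncArgumentOne\log\frac{\generalDistanceFuncArgumentOne}{\generalDistanceFuncArgumentTwo}+(1-\generalDistanceFuncArgumentOne)\log\frac{1-\generalDistanceFuncArgumentOne}{1-\generalDistanceFuncArgumentTwo}$. The lower bound should be linear in $\generalDistanceFuncArgumentTwo$ up to a binary-entropy correction in $\generalDistanceFuncArgumentOne$. The inequality can then be solved for $\generalDistanceFuncArgumentTwo$, and the correction shown to be $\landauO(\numTrainingSamples^{-\generalExponent}\log\numTrainingSamples)$.

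\textbf{Step 1 (entropy split).} Split each logarithm as
\[
\binaryKL{\generalDistanceFuncArgumentOne}{\generalDistanceFuncArgumentTwo}
=
-h(\generalDistanceFuncArgumentOne)
-\generalDistanceFuncArgumentOne\log\generalDistanceFuncArgumentTwo
-(1-\generalDistanceFuncArgumentOne)\log(1-\generalDistanceFuncArgumentTwo),
\]
where $h(\generalDistanceFuncArgumentOne):=-\generalDistanceFuncArgumentOne\log\generalDistanceFuncArgumentOne-(1-\generalDistanceFuncArgumentOne)\log(1-\generalDistanceFuncArgumentOne)$ is the binary entropy.

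\textbf{Step 2 (linear lower bound).} Since $\generalDistanceFuncArgumentTwo\le 1$, we have $-\generalDistanceFuncArgumentOne\log\generalDistanceFuncArgumentTwo\ge 0$. Since $-\log(1-\generalDistanceFuncArgumentTwo)\ge\generalDistanceFuncArgumentTwo$, we get
\[
\generalReal\ge\binaryKL{\generalDistanceFuncArgumentOne}{\generalDistanceFuncArgumentTwo}\ge(1-\generalDistanceFuncArgumentOne)\generalDistanceFuncArgumentTwo-h(\generalDistanceFuncArgumentOne).
\]

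\textbf{Step 3 (solve for $\generalDistanceFuncArgumentTwo$).} Because $\generalDistanceFuncArgumentOne=\landauO(\numTrainingSamples^{-\generalExponent})\to 0$, we may take $\numTrainingSamples$ large enough that $\generalDistanceFuncArgumentOne\le 1/2$. The bound is asymptotic, so finitely many small $\numTrainingSamples$ are absorbed into the constant. Then
\[
\generalDistanceFuncArgumentTwo\le\frac{\generalReal+h(\generalDistanceFuncArgumentOne)}{1-\generalDistanceFuncArgumentOne}\le 2\generalReal+2h(\generalDistanceFuncArgumentOne),
\]
and therefore $\generalDistanceFuncArgumentTwo-\generalDistanceFuncArgumentOne\le\generalDistanceFuncArgumentTwo\le 2\generalReal+2h(\generalDistanceFuncArgumentOne)$.

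\textbf{Step 4 (bound the entropy term).} This is the main point, and it is where the $\log\numTrainingSamples$ comes from.
\begin{itemize}
\item The second part of the entropy is easy. From $-\log(1-\generalDistanceFuncArgumentOne)\le\generalDistanceFuncArgumentOne/(1-\generalDistanceFuncArgumentOne)$ we get $-(1-\generalDistanceFuncArgumentOne)\log(1-\generalDistanceFuncArgumentOne)\le\generalDistanceFuncArgumentOne=\landauO(\numTrainingSamples^{-\generalExponent})$.
\item For the first part, write $\generalDistanceFuncArgumentOne\le C\numTrainingSamples^{-\generalExponent}$. The map $t\mapsto t\log(1/t)$ is nondecreasing on $[0,1/e]$, with the convention $0\log 0=0$. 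So for $\numTrainingSamples$ large enough that $C\numTrainingSamples^{-\generalExponent}\le 1/e$, we get
\[
\generalDistanceFuncArgumentOne\log\frac{1}{\generalDistanceFuncArgumentOne}
\le
C\numTrainingSamples^{-\generalExponent}\Big(\log\frac{1}{C}+\generalExponent\log\numTrainingSamples\Big)
=
\landauO(\numTrainingSamples^{-\generalExponent}\log\numTrainingSamples).
\]
\end{itemize}

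Combining the two parts gives $h(\generalDistanceFuncArgumentOne)=\landauO(\numTrainingSamples^{-\generalExponent}\log\numTrainingSamples)$. Substituting into Step 3 yields $\generalDistanceFuncArgumentTwo-\generalDistanceFuncArgumentOne\le 2\generalReal+\landauO(\numTrainingSamples^{-\generalExponent}\log\numTrainingSamples)$. The edge case $\generalDistanceFuncArgumentOne=0$ is covered directly, since there $h(\generalDistanceFuncArgumentOne)=0$ and Step 2 reads $\generalDistanceFuncArgumentTwo\le\generalReal$.
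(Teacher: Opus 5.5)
Your proof is correct and is essentially the paper's argument. You drop the $-r\log s\ge 0$ term and linearize via $-\log(1-s)\ge s$; the paper instead isolates $\log(1-s)$, exponentiates and uses $1-e^{-t}\le t$, but both routes arrive at the same intermediate bound $s\le \frac{x}{1-r}-\frac{r}{1-r}\log r-\log(1-r)=\frac{x+h(r)}{1-r}$, after which both bound the $r\log(1/r)$ term to get the $\log n$ factor. Your handling of that term (the factor $2$ from $\frac{1}{1-r}\le 2$, plus the explicit monotonicity of $t\log(1/t)$ on $[0,1/e]$) is slightly more careful than the paper's final line, which replaces $\frac{1}{1-r}$ by $\frac{1}{2}$ rather than by $2$; this slip is harmless for the order-wise claim.
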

\begin{proof}
We recall that due to the definition of binary KL divergence
\[
\generalReal
\geq
\binaryKL{\generalDistanceFuncArgumentOne}{\generalDistanceFuncArgumentTwo}
=
\generalDistanceFuncArgumentOne
\log
\frac{\generalDistanceFuncArgumentOne}{\generalDistanceFuncArgumentTwo}
+
(1-\generalDistanceFuncArgumentOne)
\log
\frac{1-\generalDistanceFuncArgumentOne}{1-\generalDistanceFuncArgumentTwo}
\]
or, equivalently,
\[
\log(1-\generalDistanceFuncArgumentTwo)
\geq
\frac{\generalDistanceFuncArgumentOne}{1-\generalDistanceFuncArgumentOne}
\log\generalDistanceFuncArgumentOne
+
\log(1-\generalDistanceFuncArgumentOne)
-
\frac{\generalReal}{1-\generalDistanceFuncArgumentOne}
-
\frac{\generalDistanceFuncArgumentOne}{1-\generalDistanceFuncArgumentOne}
\log\generalDistanceFuncArgumentTwo.
\]
Since $\generalDistanceFuncArgumentTwo \leq 1$ and assuming that $\numTrainingSamples$ is large enough so that $\generalDistanceFuncArgumentOne \leq 1/2$, we can further bound the right hand side and get
\[
\log(1-\generalDistanceFuncArgumentTwo)
\geq
\frac{\generalDistanceFuncArgumentOne}{1-\generalDistanceFuncArgumentOne}
\log\generalDistanceFuncArgumentOne
+
\log(1-\generalDistanceFuncArgumentOne)
-
2\generalReal.
\]
With equivalent term manipulations and using the facts that $1-\exp(-\generalRealTwo) \leq \generalRealTwo$ for all $\generalRealTwo \in \reals$ and $\log(1-\generalDistanceFuncArgumentOne) \geq -\generalDistanceFuncArgumentOne-\generalDistanceFuncArgumentOne^2$ for all $\generalDistanceFuncArgumentOne \in [0,1/2]$, we obtain
\begin{align*}
\generalDistanceFuncArgumentTwo
&\leq
1
-
\exp\left(
  \frac{\generalDistanceFuncArgumentOne}{1-\generalDistanceFuncArgumentOne}
  \log\generalDistanceFuncArgumentOne
  +
  \log(1-\generalDistanceFuncArgumentOne)
  -
  2\generalReal
\right)
\\
&\leq
2\generalReal
-
\frac{\generalDistanceFuncArgumentOne}{1-\generalDistanceFuncArgumentOne}
\log\generalDistanceFuncArgumentOne
-
\log(1-\generalDistanceFuncArgumentOne)
\\
&\leq
2\generalReal
-
\frac{\generalDistanceFuncArgumentOne}{1-\generalDistanceFuncArgumentOne}
\log\generalDistanceFuncArgumentOne
+
\generalDistanceFuncArgumentOne
+
\generalDistanceFuncArgumentOne^2
\\
&\leq
2\generalReal
-
\frac{\generalDistanceFuncArgumentOne}{2}
\log\generalDistanceFuncArgumentOne
+
\generalDistanceFuncArgumentOne
+
\generalDistanceFuncArgumentOne^2.
\end{align*}
The lemma then follows by subtracting $\generalDistanceFuncArgumentOne$ on both sides and substituting the convergence behavior of $\generalDistanceFuncArgumentOne$ on the right hand side.
\end{proof}

We substitute \eqref{eq:D_assumption} into \eqref{eq:cor-catoni-kl} and get
\[
\binaryKL{
  \Expectation_{\sampleHypothesisDist_{\trainingSet,\hypothesis}}\empiricalRisk(\hypothesis,\trainingSet)
}{
  \Expectation_{\sampleHypothesisDist_{\trainingSet,\hypothesis}}\risk(\hypothesis)
}
\leq
\landauO(\numTrainingSamples^{-2} \numBlocks \blockSize^\divGrowthExponent)
=
\landauO(\numTrainingSamples^{-1} \blockSize^{\divGrowthExponent-1}).
\]
The entry in the second column of the second row now follows by substituting $\blockSize=\landauTheta(\numTrainingSamples^\blockSizeGrowthExponent)$ and applying Lemma~\ref{lemma:kldiv}.

For the optimizing choice of $\blockSizeGrowthExponent$ (which is the same in both rows), we note that the exponent in the second column is decreasing in $\blockSizeGrowthExponent$ when $\divGrowthExponent < 1$, so in this case the maximum choice $\blockSizeGrowthExponent^*=1$ is optimal. Similarly, when $\divGrowthExponent \geq 1$, the exponent is nondecreasing, so the minimum choice $\blockSizeGrowthExponent^*=0$ is optimal. Finally, the entry in the last column follows by substituting the optimal $\blockSizeGrowthExponent^*$ into the entry in the second column.

\section{Proof of Theorem~\ref{thm:counterexample}}
\label{appendix:counterexample-proof}
We prove the theorem by first explicitly constructing $\lossFunc$, $\sampleHypothesisDist_\trainingSample$, and $\sampleHypothesisDist_{\hypothesis|\trainingSet}$, and then proving that they satisfy the properties \ref{item:counterexample-assumptions}, \ref{item:counterexample-vanish}, and \ref{item:counterexample-highprob} claimed in the theorem statement.
\paragraph{Choice of $\lossFunc$, $\sampleHypothesisDist_\trainingSample$, and $\sampleHypothesisDist_{\hypothesis|\trainingSet}$.}
We let $\counterexamplePar \in \naturals$ be a parameter to be specified later, $\featureLabelSpace := \naturalsInitialSegment{\counterexamplePar}$, $\hypothesisSpace := \{0,1\}^\counterexamplePar$ (identified with the set of functions from $\naturalsInitialSegment{\counterexamplePar}$ to $\{0,1\}$), and
\[
\lossFunc(\hypothesisInst, \trainingSampleInst)
:=
\hypothesisInst(\trainingSampleInst).
\]
This means in particular that $\lossFunc$ can only take the values $0$ and $1$ and hence is bounded as claimed in the theorem statement.

We define the sample distribution $\sampleHypothesisDist_\trainingSample$ as the uniform distribution over $\featureLabelSpace$. In preparation for specifying the learning algorithm $\sampleHypothesisDist_{\hypothesis|\trainingSet}$, we define the hypothesis $\lowRiskHypothesis$ via $\lowRiskHypothesis(\generalNatural) = 0$ for all $\generalNatural \in \naturalsInitialSegment{\counterexamplePar}$ and for any $\trainingSampleInst_1, \dots, \trainingSampleInst_\generalNaturalTwo \in \featureLabelSpace$, we define a hypothesis
\[
\overfittedHypothesis{
  \trainingSampleInst_1,
  \dots,
  \trainingSampleInst_\generalNaturalTwo
}(\generalNatural)
:=
\begin{cases}
0, &\exists \generalIndex \in \naturalsInitialSegment{\generalNaturalTwo}:
    \generalNatural = \trainingSampleInst_\generalIndex \\
1, &\text{otherwise.}
\end{cases}
\]
Let $\counterexampleProb \in [0,1]$ be another parameter to be specified later, and let $\triggerEvent \subseteq \featureLabelSpace^\blockSize$ be such that $\sampleHypothesisDist_\trainingSample^\blockSize(\triggerEvent) = \counterexampleProb$. Choosing such a $\triggerEvent$ is clearly possible as long as $\counterexampleProb$ is an integer multiple of $\counterexamplePar^{-\blockSize}$.

We can now define the randomized algorithm for our example as follows:
\[
\sampleHypothesisDist_{\hypothesis|\trainingSet}
  (
    \hypothesisInst |
    \trainingSampleInst_1,
    \dots,
    \trainingSampleInst_\numTrainingSamples
  )
:=
\begin{cases}
\indicator{\hypothesisInst=\lowRiskHypothesis},
  &(\trainingSampleInst_1, \dots, \trainingSampleInst_\blockSize)
  \notin
  \triggerEvent \\
\indicator{
  \hypothesisInst
  =
  \overfittedHypothesis{
    \trainingSampleInst_{\blockSize+1},
    \dots,
    \trainingSampleInst_\numTrainingSamples
  }
}, &(\trainingSampleInst_1, \dots, \trainingSampleInst_\blockSize)
  \in
  \triggerEvent.
\end{cases}
\]

We conclude this part with a brief discussion of our choices that is intended to provide some intuitive understanding and a brief outline of the remainder of this proof. $\lowRiskHypothesis$ is a hypothesis of low loss, or more precisely, the empirical loss on any sample as well as the population loss will always have the lowest possible value $0$. On the other hand, $\overfittedHypothesis{\trainingSampleInst_1, \dots, \trainingSampleInst_\generalNaturalTwo}$ is a heavily overfitted hypothesis in the sense that it will have empirical loss $0$ on any sample that is a subset of $\{\trainingSampleInst_1, \dots, \trainingSampleInst_\generalNaturalTwo\}$ but it has a population loss of at least $(\counterexamplePar-\generalNaturalTwo)/\counterexamplePar$ which is close to $1$ as long as $\generalNaturalTwo \ll \counterexamplePar$. The learning algorithm is chosen in such a way that most of the time it outputs $\lowRiskHypothesis$, and hence the generalization gap in this case will be $0$. However, with a carefully calibrated probability $\counterexampleProb$, it will output the hypothesis
$
\overfittedHypothesis{
  \trainingSampleInst_{\blockSize+1},
  \dots,
  \trainingSampleInst_\numTrainingSamples
}
$
which is overfitted to most of our training set, and consequently will exhibit a large generalization gap. The probability $\counterexampleProb$ will be carefully chosen such that it is small enough to allow the divergence terms in \eqref{eq:main_thm_bound} to vanish and at the same time is large enough so that \eqref{eq:counterexample-bound} holds.

\paragraph{Item \ref{item:counterexample-assumptions}: Assumptions of Theorem~\ref{thm:PAC_Bayes_subset_complete_expectation}.}
What remains to be shown is that \eqref{eq:main_thm_mgf} is satisfied. This is straightforward as we have already argued the loss function $\lossFunc$ is bounded in $[0,1]$. Hence, by Hoeffding's Lemma (see, e.g. \cite[Lemma 2.2]{boucheron2013concentration}), $\lossFunc(\hypothesisInst,\trainingSample)$ is $1/4$-subgaussian for all $\hypothesisInst \in \hypothesisSpace$ and therefore \eqref{eq:main_thm_mgf} is satisfied with
\begin{equation}
\label{eq:counterexample-mgf}
\mgfFunc_\blockSize(\mgfVar)
:=
\exp\left(
  \frac{\mgfVar^2}{8\blockSize}
\right).
\end{equation}

\paragraph{Item \ref{item:counterexample-vanish}: Choice of $\bayesPrior_\hypothesis$ such that the right hand side of \eqref{eq:main_thm_bound} vanishes.}
We define
\begin{equation}
\label{eq:counterexample-prior}
\bayesPrior_\hypothesis(\hypothesisInst)
:=
\counterexampleBayesPar
\indicator{\hypothesisInst=\lowRiskHypothesis}
+
(1-\counterexampleBayesPar)
\counterexamplePar^{\blockSize-\numTrainingSamples}
\sum_{
  \partialTuple{\trainingSampleInst}{1}{\numTrainingSamples-\blockSize}
  \in
  \featureLabelSpace^{\numTrainingSamples-\blockSize}
}
  \indicator{
    \hypothesisInst
    =
    \overfittedHypothesis{
      \trainingSampleInst_{1},
      \dots,
      \trainingSampleInst_{\numTrainingSamples-\blockSize}
    }
  },
\end{equation}
where $\counterexampleBayesPar \in [0,1]$ is another parameter to be specified later.

Next, we show that the right rand side in \eqref{eq:main_thm_bound} vanishes with appropriate choices for the parameters of our construction. Recalling the definition $\sampleBlock{1} := \partialTuple{\trainingSample}{1}{\blockSize}, \dots, \sampleBlock{\numBlocks} := \partialTuple{\trainingSample}{\numTrainingSamples-\blockSize+1}{\numTrainingSamples}$, this means that we have
\begin{align}
\label{eq:counterexample-alogrithm-triggerblock}
\sampleHypothesisDist_{\hypothesis|\sampleBlock{1}}
  (
    \hypothesisInst |
    \partialTuple{\trainingSampleInst}{1}{\blockSize}
  )
&=
\begin{cases}
\counterexamplePar^{\blockSize-\numTrainingSamples}
\sum_{
  \partialTuple{\trainingSampleInst}{\blockSize+1}{\numTrainingSamples}
  \in
  \featureLabelSpace^{\numTrainingSamples-\blockSize}
}
  \indicator{
    \hypothesisInst
    =
    \overfittedHypothesis{
      \partialTuple{\trainingSampleInst}{\blockSize+1}{\numTrainingSamples}
    }
  },
  &\partialTuple{\trainingSampleInst}{1}{\blockSize}
  \in
  \triggerEvent \\
\indicator{\hypothesisInst=\lowRiskHypothesis},
  &\text{otherwise,}
\end{cases}
\\
\label{eq:counterexample-alogrithm-other}
\sampleHypothesisDist_{\hypothesis|\sampleBlock{\blockIndex}}
  (
    \hypothesisInst |
    \partialTuple{\trainingSampleInst}{1}{\blockSize}
  )
&=
(1-\counterexampleProb)
\indicator{\hypothesisInst=\lowRiskHypothesis}
+
\counterexampleProb
\counterexamplePar^{2\blockSize-\numTrainingSamples}
\sum_{
  \partialTuple{\trainingSampleInst}{\blockSize+1}{\numTrainingSamples-\blockSize}
  \in
  \featureLabelSpace^{\numTrainingSamples-2\blockSize}
}
  \indicator{
    \hypothesisInst
    =
    \overfittedHypothesis{
      \partialTuple{\trainingSampleInst}{1}{\numTrainingSamples-\blockSize}
    }
  }
\end{align}
for $\blockIndex \in \{2, \dots, \numBlocks\}$.

With this we are ready to calculate bounds for the divergence terms that appear on the right hand side of \eqref{eq:main_thm_bound}. Conditioned on the event $\trainingSet=\partialTuple{\trainingSampleInst}{1}{\numTrainingSamples}$ for some $\partialTuple{\trainingSampleInst}{1}{\numTrainingSamples} \in \featureLabelSpace^\numTrainingSamples$ with $\partialTuple{\trainingSampleInst}{1}{\blockSize} \in \triggerEvent$ and denoting
$
\hat{\hypothesisSpace}
:=
\{
  \hypothesisInst \in \hypothesisSpace:
  \sampleHypothesisDist_{\hypothesis|\sampleBlock{1}}
    (\hypothesisInst|\partialTuple{\trainingSampleInst}{1}{\blockSize})
  >
  0
\}
$,
we have
\begin{align}
\nonumber
\kldiv{
  \sampleHypothesisDist_{\hypothesis|\sampleBlock{1}}
}{
  \bayesPrior_\hypothesis
}
&=
\sum_{\hypothesisInst \in \hat{\hypothesisSpace}}
  \sampleHypothesisDist_{\hypothesis|\sampleBlock{1}}
    (\hypothesisInst|\partialTuple{\trainingSampleInst}{1}{\blockSize})
  \log\frac{
    \sampleHypothesisDist_{\hypothesis|\sampleBlock{1}}
      (\hypothesisInst|\partialTuple{\trainingSampleInst}{1}{\blockSize})
  }{
    \bayesPrior_\hypothesis(\hypothesisInst)
  }
\\
\nonumber
\overset{(a)}&{=}
\sum_{\hypothesisInst \in \hat{\hypothesisSpace}}
  \sampleHypothesisDist_{\hypothesis|\sampleBlock{1}}
    (\hypothesisInst|\partialTuple{\trainingSampleInst}{1}{\blockSize})
  \log\frac{
    \counterexamplePar^{\blockSize-\numTrainingSamples}
    \sum_{
      \partialTuple{\trainingSampleInst}{\blockSize+1}{\numTrainingSamples}
      \in
      \featureLabelSpace^{\numTrainingSamples-\blockSize}
    }
      \indicator{
        \hypothesisInst
        =
        \overfittedHypothesis{
          \partialTuple{\trainingSampleInst}{\blockSize+1}{\numTrainingSamples}
        }
      }
  }{
    (1-\counterexampleBayesPar)
    \counterexamplePar^{\blockSize-\numTrainingSamples}
    \sum_{
      \partialTuple{\trainingSampleInst}{1}{\numTrainingSamples-\blockSize}
      \in
      \featureLabelSpace^{\numTrainingSamples-\blockSize}
    }
      \indicator{
        \hypothesisInst
        =
        \overfittedHypothesis{
          \partialTuple{\trainingSampleInst}{1}{\numTrainingSamples-\blockSize}
        }
      }
  }
\\
\label{eq:counterexample-div-1-overfit}
\overset{(b)}&{=}
\log\frac{1}{1-\counterexampleBayesPar}
\end{align}
where step (a) follows by observing $\lowRiskHypothesis \notin \hat{\hypothesisSpace}$ and substituting \eqref{eq:counterexample-alogrithm-triggerblock} and \eqref{eq:counterexample-prior} and step (b) by observing that the sums in the numerator and denominator are identical since they differ only in the names of the summation variables.

Next, we condition on the event $\trainingSet=\partialTuple{\trainingSampleInst}{1}{\numTrainingSamples}$ for some $\partialTuple{\trainingSampleInst}{1}{\numTrainingSamples} \in \featureLabelSpace^\numTrainingSamples$ with $\partialTuple{\trainingSampleInst}{1}{\blockSize} \notin \triggerEvent$. In this case, we obtain
\begin{equation}
\label{eq:counterexample-div-1-lowrisk}
\kldiv{
  \sampleHypothesisDist_{\hypothesis|\sampleBlock{1}}
}{
  \bayesPrior_\hypothesis
}
=
\sum_{\hypothesisInst \in \hat{\hypothesisSpace}}
  \sampleHypothesisDist_{\hypothesis|\sampleBlock{1}}
    (\hypothesisInst|\partialTuple{\trainingSampleInst}{1}{\blockSize})
  \log\frac{
    \sampleHypothesisDist_{\hypothesis|\sampleBlock{1}}
      (\hypothesisInst|\partialTuple{\trainingSampleInst}{1}{\blockSize})
  }{
    \bayesPrior_\hypothesis(\hypothesisInst)
  }
\overset{(a)}{=}
\log\frac{
  1
}{
  \bayesPrior_\hypothesis(\lowRiskHypothesis)
}
\overset{(b)}{=}
\log\frac{
  1
}{
  \counterexampleBayesPar
}
\end{equation}
where step (a) follows because $\hypothesis = \lowRiskHypothesis$ almost surely under $\sampleHypothesisDist_{\hypothesis|\sampleBlock{1}}(\cdot|\partialTuple{\trainingSampleInst}{1}{\blockSize})$ according to \eqref{eq:counterexample-alogrithm-triggerblock} and step (b) follows by substituting \eqref{eq:counterexample-prior}.

For $\blockIndex \in \{2, \dots, \numBlocks\}$, we condition on the event $\trainingSet=\partialTuple{\trainingSampleInst}{1}{\numTrainingSamples}$ for an arbitrary $\partialTuple{\trainingSampleInst}{1}{\numTrainingSamples} \in \featureLabelSpace^\numTrainingSamples$ and write
$
\hat{\hypothesisSpace}
:=
\{
  \hypothesisInst \in \hypothesisSpace:
  \sampleHypothesisDist_{\hypothesis|\sampleBlock{\blockIndex}}
    (\hypothesisInst|\partialTuple{\trainingSampleInst}
                                  {(\blockIndex-1)\blockSize+1}
                                  {\blockIndex\blockSize})
  >
  0
\}
$.
Thus, we get
\begin{align}
\nonumber
&\hphantom{{}={}}
\kldiv{
  \sampleHypothesisDist_{\hypothesis|\sampleBlock{\blockIndex}}
}{
  \bayesPrior_\hypothesis
}
\\
\nonumber
&=
\sum_{\hypothesisInst \in \hat{\hypothesisSpace}}
  \sampleHypothesisDist_{\hypothesis|\sampleBlock{\blockIndex}}
    (\hypothesisInst|\partialTuple{\trainingSampleInst}
                                  {(\blockIndex-1)\blockSize+1}
                                  {\blockIndex\blockSize})
  \log\frac{
    \sampleHypothesisDist_{\hypothesis|\sampleBlock{\blockIndex}}
      (\hypothesisInst|\partialTuple{\trainingSampleInst}
                                  {(\blockIndex-1)\blockSize+1}
                                  {\blockIndex\blockSize})
  }{
    \bayesPrior_\hypothesis(\hypothesisInst)
  }
\\
\nonumber
&=
\begin{multlined}[t]
\sampleHypothesisDist_{\hypothesis|\sampleBlock{\blockIndex}}
  (\lowRiskHypothesis|\partialTuple{\trainingSampleInst}
                                    {(\blockIndex-1)\blockSize+1}
                                    {\blockIndex\blockSize})
\log\frac{
  \sampleHypothesisDist_{\hypothesis|\sampleBlock{\blockIndex}}
    (\lowRiskHypothesis|\partialTuple{\trainingSampleInst}
                                     {(\blockIndex-1)\blockSize+1}
                                     {\blockIndex\blockSize})
}{
  \bayesPrior_\hypothesis(\lowRiskHypothesis)
}
\\+
\sum_{\hypothesisInst \in \hat{\hypothesisSpace} \setminus \{ \lowRiskHypothesis \}}
  \sampleHypothesisDist_{\hypothesis|\sampleBlock{\blockIndex}}
    (\hypothesisInst|\partialTuple{\trainingSampleInst}
                                  {(\blockIndex-1)\blockSize+1}
                                  {\blockIndex\blockSize})
  \log\frac{
    \sampleHypothesisDist_{\hypothesis|\sampleBlock{\blockIndex}}
      (\hypothesisInst|\partialTuple{\trainingSampleInst}
                                  {(\blockIndex-1)\blockSize+1}
                                  {\blockIndex\blockSize})
  }{
    \bayesPrior_\hypothesis(\hypothesisInst)
  }
\end{multlined}
\\
\nonumber
\overset{(a)}&{=}
(1-\counterexampleProb)
\log\frac{1-\counterexampleProb}
         {\counterexampleBayesPar}
+
\sum_{\hypothesisInst \in \hat{\hypothesisSpace} \setminus \{ \lowRiskHypothesis \}}
  \sampleHypothesisDist_{\hypothesis|\sampleBlock{\blockIndex}}
    (\hypothesisInst|\partialTuple{\trainingSampleInst}
                                  {(\blockIndex-1)\blockSize+1}
                                  {\blockIndex\blockSize})
  \\ \nonumber &~~\cdot
  \log\frac{
    \counterexampleProb
    \counterexamplePar^{2\blockSize-\numTrainingSamples}
    \sum_{
      \partialTuple{\hat\trainingSampleInst}{\blockSize+1}{(\blockIndex-1)\blockSize}
      \in
      \featureLabelSpace^{(\blockIndex-2)\blockSize},
      \partialTuple{\hat\trainingSampleInst}{\blockIndex\blockSize+1}{\numTrainingSamples}
      \in
      \featureLabelSpace^{\numTrainingSamples-\blockIndex\blockSize}
    }
      \indicator{
        \hypothesisInst
        =
        \overfittedHypothesis{
          \hat{\trainingSampleInst}_{\blockSize+1},
          \dots,
          \hat{\trainingSampleInst}_{(\blockIndex-1)\blockSize},
          \trainingSampleInst_{(\blockIndex-1)\blockSize+1},
          \dots,
          \trainingSampleInst_{\blockIndex\blockSize},
          \hat{\trainingSampleInst}_{\blockIndex\blockSize+1},
          \dots,
          \hat{\trainingSampleInst}_{\numTrainingSamples}
        }
      }
  }{
    (1-\counterexampleBayesPar)
    \counterexamplePar^{\blockSize-\numTrainingSamples}
    \sum_{
      \partialTuple{\hat\trainingSampleInst}{1}{\numTrainingSamples-\blockSize}
      \in
      \featureLabelSpace^{\numTrainingSamples-\blockSize}
    }
      \indicator{
        \hypothesisInst
        =
        \overfittedHypothesis{
          \hat{\trainingSampleInst}_{1},
          \dots,
          \hat{\trainingSampleInst}_{\numTrainingSamples-\blockSize}
        }
      }
  }
\\
\nonumber
\overset{(b)}&{\leq}
(1-\counterexampleProb)
\log\frac{1-\counterexampleProb}
         {\counterexampleBayesPar}
+
\sum_{\hypothesisInst \in \hat{\hypothesisSpace} \setminus \{ \lowRiskHypothesis \}}
  \sampleHypothesisDist_{\hypothesis|\sampleBlock{\blockIndex}}
    (\hypothesisInst|\partialTuple{\trainingSampleInst}
                                  {(\blockIndex-1)\blockSize+1}
                                  {\blockIndex\blockSize})
  \log\frac{\counterexampleProb\counterexamplePar^{2\blockSize-\numTrainingSamples}}
         {
           (1-\counterexampleBayesPar)
           \counterexamplePar^{\blockSize-\numTrainingSamples}
         }
\\
\nonumber
\overset{(c)}&{=}
(1-\counterexampleProb)\log(1-\counterexampleProb)
+
(1-\counterexampleProb)\log\frac{1}{\counterexampleBayesPar}
+
\counterexampleProb\log\counterexampleProb
+
\counterexampleProb\blockSize\log\counterexamplePar
+
\counterexampleProb
\log\frac{1}{1-\counterexampleBayesPar}
\\
\label{eq:counterexample-div-j}
&\leq
\log\frac{1}{\counterexampleBayesPar}
+
\counterexampleProb\blockSize\log\counterexamplePar
+
\counterexampleProb
\log\frac{1}{1-\counterexampleBayesPar},
\end{align}
where step (a) follows by substituting \eqref{eq:counterexample-alogrithm-other} and \eqref{eq:counterexample-prior}, step (b) follows because the sum in the denominator contains all the summands that the sum in the numerator contains (hence it cannot be smaller), and in step (c) we use the fact that
\[
\sampleHypothesisDist_{\hypothesis|\sampleBlock{\blockIndex}}
  (
    \hat{\hypothesisSpace} \setminus \{ \lowRiskHypothesis \}
    |
    \partialTuple{\trainingSampleInst}
                 {(\blockIndex-1)\blockSize+1}
                 {\blockIndex\blockSize}
  )
=
1
-
\sampleHypothesisDist_{\hypothesis|\sampleBlock{\blockIndex}}
  (\lowRiskHypothesis|\partialTuple{\trainingSampleInst}
                                   {(\blockIndex-1)\blockSize+1}
                                   {\blockIndex\blockSize})
=
\counterexampleProb.
\]

We can now calculate a bound for the divergence terms in \eqref{eq:main_thm_bound}. Namely,
\begin{align}
\nonumber
&\hphantom{{}={}}
\sum_{\blockIndex=1}^{\numBlocks}
  \Expectation_{\sampleHypothesisDist_{\sampleBlock{\blockIndex}}}
  \kldiv{\sampleHypothesisDist_{\hypothesis|\sampleBlock{\blockIndex}}}
                                {\bayesPrior_\hypothesis}
\\
\nonumber
\overset{(a)}&{=}
\begin{multlined}[t]
\sampleHypothesisDist_{\sampleBlock{1}}(\triggerEvent)
\Expectation_{\sampleHypothesisDist_{\sampleBlock{1}}}\left(
  \kldiv{\sampleHypothesisDist_{\hypothesis|\sampleBlock{1}}}
                               {\bayesPrior_\hypothesis}
  |
  \sampleBlock{1} \in \triggerEvent
\right)
+
(1-\sampleHypothesisDist_{\sampleBlock{1}}(\triggerEvent))
\\ \cdot
\Expectation_{\sampleHypothesisDist_{\sampleBlock{1}}}\left(
  \kldiv{\sampleHypothesisDist_{\hypothesis|\sampleBlock{1}}}
                               {\bayesPrior_\hypothesis}
  |
  \sampleBlock{1} \notin \triggerEvent
\right)
+
\sum_{\blockIndex=2}^{\numBlocks}
  \Expectation_{\sampleHypothesisDist_{\sampleBlock{\blockIndex}}}
  \kldiv{\sampleHypothesisDist_{\hypothesis|\sampleBlock{\blockIndex}}}
                                {\bayesPrior_\hypothesis}
\end{multlined}
\\
\nonumber
\overset{(b)}&{\leq}
\counterexampleProb
\log\frac{1}{1-\counterexampleBayesPar}
+
(1-\counterexampleProb)
\log\frac{1}{\counterexampleBayesPar}
+
\left(
  \frac{\numTrainingSamples}{\blockSize}
  -
  1
\right)
\left(
  \log\frac{1}{\counterexampleBayesPar}
  +
  \counterexampleProb\blockSize\log\counterexamplePar
  +
  \counterexampleProb
  \log\frac{1}{1-\counterexampleBayesPar}
\right)
\\
\label{eq:counterexample-divergence-terms}
&\leq
\counterexampleProb
\log\frac{1}{1-\counterexampleBayesPar}
+
\frac{\numTrainingSamples}{\blockSize}
\log\frac{1}{\counterexampleBayesPar}
+
\numTrainingSamples\counterexampleProb\log\counterexamplePar
+
\frac{\numTrainingSamples}{\blockSize}
\counterexampleProb
\log\frac{1}{1-\counterexampleBayesPar},
\end{align}
where (a) is due to the law of total expectation and (b) follows by substituting \eqref{eq:counterexample-div-1-overfit}, \eqref{eq:counterexample-div-1-lowrisk}, and \eqref{eq:counterexample-div-j}.

This allows us to bound the right hand side of \eqref{eq:main_thm_bound} (from now on denoted as $\counterexampleRhs$). Substituting \eqref{eq:counterexample-mgf} and \eqref{eq:counterexample-divergence-terms}, we obtain
\begin{equation}
\label{eq:counterexample-expectation-bound-par}
\counterexampleRhs
\leq
\frac{\mgfVar}{8\numTrainingSamples}
+
\frac{
  \counterexampleProb
  \log\frac{1}{1-\counterexampleBayesPar}
  +
  \frac{\numTrainingSamples}{\blockSize}
  \log\frac{1}{\counterexampleBayesPar}
  +
  \numTrainingSamples\counterexampleProb\log\counterexamplePar
  +
  \frac{\numTrainingSamples}{\blockSize}
  \counterexampleProb
  \log\frac{1}{1-\counterexampleBayesPar}
}
{\mgfVar}.
\end{equation}
With the choices
\begin{align}
\nonumber
\counterexamplePar
&:=
3\numTrainingSamples \log \numTrainingSamples
\\
\label{eq:counterexample-bayesparchoice}
\counterexampleBayesPar
&:=
\exp\left(-\frac{\blockSize}{\numTrainingSamples}\right)
\\
\nonumber
\counterexampleProb
&:=
\frac{3}{\counterexamplePar}
=
\frac{1}{\numTrainingSamples\log\numTrainingSamples}
\\
\nonumber
\mgfVar
&:=
\sqrt{\numTrainingSamples}
\end{align}
and the observation
\begin{equation}
\label{eq:counterexample-bayesparbound}
1-\counterexampleBayesPar
=
1
-
\frac{1}{\exp\left(\frac{\blockSize}{\numTrainingSamples}\right)}
>
1
-
\frac{1}{1+\frac{\blockSize}{\numTrainingSamples}}
=
\frac{\blockSize}{\numTrainingSamples+\blockSize},
\end{equation}
we can further bound \eqref{eq:counterexample-expectation-bound-par} and get
\begin{align*}
\counterexampleRhs
&\leq
\frac{\sqrt{\numTrainingSamples}}{8\numTrainingSamples}
+
\frac{
       \frac{\log\frac{\numTrainingSamples+\blockSize}{\blockSize}}
            {\numTrainingSamples\log\numTrainingSamples}
       +
       1
       +
       \frac{\log(3\numTrainingSamples\log\numTrainingSamples)}{\log\numTrainingSamples}
       +
       \frac{\log\frac{\numTrainingSamples+\blockSize}{\blockSize}}{\blockSize \log\numTrainingSamples}
    }
    {\sqrt{\numTrainingSamples}}
\\
\overset{(a)}&{\leq}
\frac{1}{\sqrt{\numTrainingSamples}}
\left(
  \frac{1}{8}
  +
  \frac{\log(\numTrainingSamples+1)}
      {\numTrainingSamples\log\numTrainingSamples}
  +
  1
  +
  \frac{\log(3\numTrainingSamples\log\numTrainingSamples)}{\log\numTrainingSamples}
  +
  \frac{\log(\numTrainingSamples+1)}{\log\numTrainingSamples}
\right)
\\
&=
\frac{1}{\sqrt{\numTrainingSamples}}
\left(
  \frac{17}{8}
  +
  \frac{\log(\numTrainingSamples+1)}
      {\numTrainingSamples\log\numTrainingSamples}
  +
  \frac{\log(\numTrainingSamples+1) + \log 3}{\log\numTrainingSamples}
  +
  \frac{\log\log\numTrainingSamples)}{\log\numTrainingSamples}
\right),
\end{align*}
where in (a) we use the fact that since $\blockSize$ takes only values in $\naturalsInitialSegment{\numTrainingSamples}$, we have $(\numTrainingSamples+\blockSize)/\blockSize \leq \numTrainingSamples + 1$ as well as $\blockSize \geq 1$ for all valid choices of $\blockSize$. Since all terms in the bracket are nonincreasing, the bound \eqref{eq:main_thm_bound} vanishes with rate $\landauO(1/\sqrt{\numTrainingSamples})$.

\paragraph{Item \ref{item:counterexample-highprob}: The generalization error does not vanish reasonably fast with small error probability.}
With probability $\counterexampleProb = 1/\numTrainingSamples\log\numTrainingSamples$ over $\sampleHypothesisDist_\trainingSet$, we have $\hypothesis=\overfittedHypothesis{\partialTuple{\trainingSample}{\blockSize+1}{\numTrainingSamples}}$ almost surely over $\sampleHypothesisDist_{\hypothesis|\trainingSet}$ and therefore
\begin{align*}
\Expectation_{\sampleHypothesisDist_{\hypothesis|\trainingSet}}\risk(\hypothesis)
-
\Expectation_{\sampleHypothesisDist_{\hypothesis|\trainingSet}}\empiricalRisk(\hypothesis,\trainingSet)
&\geq
\frac{\counterexamplePar-\numTrainingSamples+\blockSize}
     {\counterexamplePar}
-
\frac{\blockSize}{\numTrainingSamples}
\\
&=
\frac{3\numTrainingSamples\log\numTrainingSamples-\numTrainingSamples+\blockSize}
     {3\numTrainingSamples\log\numTrainingSamples}
-
\frac{\blockSize}{\numTrainingSamples}
\\
&=
\left(
  1
  -
  \frac{\blockSize}{\numTrainingSamples}
\right)
\left(
  1
  -
  \frac{1}{3\log\numTrainingSamples}
\right).
\end{align*}
Towards a contradiction, we assume that the logical opposite of \eqref{eq:counterexample-bound} holds, i.e., that we have $\generalBoundConstTerm_\numTrainingSamples$ which vanishes as $\numTrainingSamples \rightarrow \infty$ and $\generalBoundMultTerm_\numTrainingSamples$ which vanishes with order $\landauo(1/\errorprobFunction(\numTrainingSamples\log\numTrainingSamples))$ as $\numTrainingSamples \rightarrow \infty$, and
\[
\sampleHypothesisDist_\trainingSet\left(
  \Expectation_{\sampleHypothesisDist_{\hypothesis|\trainingSet}}\risk(\hypothesis)
  -
  \Expectation_{\sampleHypothesisDist_{\hypothesis|\trainingSet}}\empiricalRisk(\hypothesis,\trainingSet)
  <
  \generalBoundConstTerm_\numTrainingSamples
  +
  \generalBoundMultTerm_\numTrainingSamples
  \errorprobFunction\left(
    \frac{1}{\errorprob}
  \right)
\right)
>
\errorprob
\]
for a sufficiently large $\numTrainingSamples$ and all $\errorprob \in [0,1]$.

Setting $\errorprob := \counterexampleProb = 1/\numTrainingSamples \log \numTrainingSamples$, this implies that with probability at least $\errorprob$
\begin{align*}
\generalBoundConstTerm_\numTrainingSamples
+
\generalBoundMultTerm_\numTrainingSamples
\errorprobFunction\left(
  \numTrainingSamples \log \numTrainingSamples
\right)
&=
\generalBoundConstTerm_\numTrainingSamples
+
\generalBoundMultTerm_\numTrainingSamples
\errorprobFunction\left(
  \frac{1}{\errorprob}
\right)
\\
&>
\Expectation_{\sampleHypothesisDist_{\hypothesis|\trainingSet}}\risk(\hypothesis)
-
\Expectation_{\sampleHypothesisDist_{\hypothesis|\trainingSet}}\empiricalRisk(\hypothesis,\trainingSet)
\\
&\geq
\left(
  1
  -
  \frac{\blockSize}{\numTrainingSamples}
\right)
\left(
  1
  -
  \frac{1}{3\log\numTrainingSamples}
\right).
\end{align*}
We can observe that since the left and right hand side of this inequality chain are not random and the entire chain holds with positive probability, the left hand side is a valid (deterministic) upper bound for the right hand side. Since $\blockSize/\numTrainingSamples$ vanishes, the right hand side converges to $1$, so in particular it is clearly bounded away from $0$. This contradicts the assertion that $\generalBoundConstTerm_\numTrainingSamples$ converges to $0$ and $\generalBoundMultTerm_\numTrainingSamples$ converges to $0$ with rate $\landauo(1/\errorprobFunction(\numTrainingSamples\log\numTrainingSamples))$.

\section{Proof of \eqref{eq:instantaneous-divergence}}
\label{appendix:instantaneous-divergence}
To see this, we can sum the instantaneous divergences conditioned on the overfitting event calculated in \eqref{eq:counterexample-div-1-overfit} and \eqref{eq:counterexample-div-j}:
\begin{align*}
\instantaneousDivergence
&\leq
\log\frac{1}{1-\counterexampleBayesPar}
+
\frac{\numTrainingSamples}{\blockSize}
\left(
  \log\frac{1}{\counterexampleBayesPar}
  +
  \counterexampleProb\blockSize\log\counterexamplePar
  +
  \counterexampleProb
  \log\frac{1}{1-\counterexampleBayesPar}
\right)
\\
\overset{(a)}&{<}
\log \left(\numTrainingSamples+\blockSize\right)
-
\log \blockSize
+
1
+
\frac{\log 3}{\log\numTrainingSamples}
+
1
+
\frac{\log \log \numTrainingSamples}{\log\numTrainingSamples}
+
\frac{\log \frac{\numTrainingSamples+\blockSize}{\blockSize}}{\numTrainingSamples\log\numTrainingSamples},
\end{align*}
where in step (a) we have substituted \eqref{eq:counterexample-bayesparchoice} and \eqref{eq:counterexample-bayesparbound}. Clearly the dominating term is $\log (\numTrainingSamples+\blockSize)$ which can be bounded as
\[
\log (\numTrainingSamples+\blockSize)
\leq
\log(2\numTrainingSamples)
=
\log2 + \log\numTrainingSamples.
\]

\end{document}